\documentclass[12pt, english, oneside, a4paper]{article}

\usepackage[T1]{fontenc}
\usepackage{amsmath}
\usepackage{amssymb}
\usepackage{lscape}
\usepackage{amsthm}
\usepackage[left=2cm, right=2cm, top=2.5cm]{geometry}
\usepackage{hyperref}
\usepackage{algorithm}
\usepackage{bbm}
\usepackage{booktabs}
\usepackage{algpseudocode}
\usepackage{tikz}
\usepackage{caption}
\usepackage{subcaption}
\usepackage{fancyhdr}
\usepackage[inline]{enumitem}
\usepackage{comment}
\usepackage{nicefrac}
\usepackage{bm}
\usepackage{mathrsfs}
\usepackage{multirow}
\usepackage{graphicx}
\usepackage[utf8]{inputenc}
\usepackage{cancel}
\usepackage{mathtools}
\usepackage{natbib} 
\DeclareFontFamily{U}{rcjhbltx}{}
\DeclareFontShape{U}{rcjhbltx}{m}{n}{<->rcjhbltx}{}
\DeclareSymbolFont{hebrewletters}{U}{rcjhbltx}{m}{n}

\let\aleph\relax
\let\gimel\relax
\DeclareMathSymbol{\aleph}{\mathord}{hebrewletters}{39}
\DeclareMathSymbol{\gimel}{\mathord}{hebrewletters}{103}
\DeclareMathSymbol{\lamed}{\mathord}{hebrewletters}{108}

\newtheorem{theorem}{Theorem}
\newtheorem{lemma}[theorem]{Lemma}
\newtheorem{proposition}[theorem]{Proposition}
\newtheorem{corollary}{Corollary}[theorem]

\newtheorem{definition}[theorem]{Definition}
\newtheorem{remark}{Remark}

\newcommand{\R}{\mathbb{R}}

\renewcommand{\S}{\mathbb{S}}

\newcommand{\y}{\boldsymbol{y}}
\newcommand{\X}{\boldsymbol{X}}
\newcommand{\x}{\boldsymbol{x}}

\renewcommand{\a}{\boldsymbol{a}}
\newcommand{\bb}{\boldsymbol{\beta}}

\newcommand{\bt}{\boldsymbol{\theta}}
\newcommand{\bT}{\boldsymbol{\Theta}}

\title{Robust Predictive Uncertainty and Double Descent \\ in Contaminated Bayesian Random Features\thanks{To the best of our knowledge, this is the last manuscript on which SM worked actively before his untimely passing. We dedicate this work to his memory.}}

\author{Michele Caprio\thanks{The University of Manchester, UK}
\and Katerina Papagiannouli\thanks{University of Pisa, Italy}
  \and Siu Lun Chau\thanks{Nanyang Technological University, Singapore}
  \and Sayan Mukherjee\thanks{Max Planck Institute for Mathematics in the Sciences, Germany}}

\date{}

\begin{document}

\maketitle

\begin{abstract}

We propose a robust Bayesian formulation of random feature (RF) regression that accounts explicitly for prior and likelihood misspecification via Huber-style contamination sets. Starting from the classical equivalence between ridge-regularized RF training and Bayesian inference with Gaussian priors and likelihoods, we replace the single prior and likelihood with $\epsilon$- and $\eta$-contaminated credal sets, respectively, and perform inference using pessimistic generalized Bayesian updating. We derive explicit and tractable bounds for the resulting lower and upper posterior predictive densities. These bounds show that, when contamination is moderate, prior and likelihood ambiguity effectively acts as a direct contamination of the posterior predictive distribution, yielding uncertainty envelopes around the classical Gaussian predictive. We introduce an Imprecise Highest Density Region (IHDR) for robust predictive uncertainty quantification and show that it admits an efficient approximation via an adjusted Gaussian credible interval. We further obtain predictive variance bounds (under a mild truncation approximation for the upper bound) and prove that they preserve the leading-order proportional-growth asymptotics known for RF models. Together, these results establish a robustness theory for Bayesian random features: predictive uncertainty remains computationally tractable, inherits the classical double-descent phase structure, and is improved by explicit worst-case guarantees under bounded prior and likelihood misspecification.

\end{abstract}

\section{Introduction}\label{intro}

Random feature (RF) models are a classical and widely used mechanism to scale kernel methods to modern dataset sizes~\citep{rahimi2007random}. 
By replacing an implicit kernel with a finite collection of randomized nonlinear features, RF models turn nonlinear function learning into a linear problem in a randomized feature space, enabling fast training via least squares and ridge regression. 

Beyond their computational appeal, RF models have become a central object in the study of high-dimensional learning. In proportional asymptotic regimes where the number of samples $n$, ambient dimension $d$, and number of random features $N$ grow at comparable rates, prediction risk and predictive variance exhibit the now-classical double-descent phenomenon~\citep{belkin2019reconciling}. 
A precise asymptotic theory has since been developed for ridgeless least squares and random feature regression~\citep{mei2019generalization,hastie2022surprises}, showing that risk and variance display singular behavior near the interpolation threshold $N \approx n$, corresponding to $\psi_1 = N/d$ approaching $\psi_2 = n/d$. 
These works reveal that double descent is not merely empirical, but a structural feature of high-dimensional models.

A complementary and increasingly popular viewpoint interprets ridge-regularized RF training as Bayesian inference. In particular, the RF ridge solution can be understood as a maximum a posteriori (MAP) estimator. In standard Gaussian constructions, it coincides with the mean of a posterior predictive distribution for new responses \citep[Section~2.2]{baek2023asymptotics}. This Bayesian lens naturally supports uncertainty quantification, provides a principled route to hyperparameter selection, and connects RF models to broader probabilistic and statistical learning paradigms.

However, the classical Bayesian RF formulation relies on modeling choices that are rarely exact in practice. The Gaussian prior on the random-feature coefficients and the Gaussian likelihood for the outputs are convenient and analytically tractable, but they can be misspecified due to heavy-tailed noise, label corruption, heteroskedasticity, or distribution shift between training and testing. Even small deviations from these assumptions may yield posterior predictives that are overly confident, brittle to outliers, or sensitive to prior choices. This motivates the question we study:

\emph{How do Bayesian RF conclusions change when we explicitly account for prior and likelihood misspecification?}

\textbf{Our approach: a Contamination Random Features model.} We address this question through the lens of Bayesian sensitivity analysis \citep{berger} using Huber-style contamination sets \citep{walley,huber}. Concretely, instead of committing to a single prior and likelihood, we consider credal sets~\citep{levi2}, i.e. convex and weak$^\star$-closed sets of probabilities. Specifically, we work with an $\epsilon$-contaminated prior set and an $\eta$-contaminated likelihood set, obtained by mixing the baseline Gaussian specifications with arbitrary alternatives. This formalizes a principled ``ambiguity budget'' for both prior and data generating assumptions, and it also provides a way to model certain forms of distribution shift as long as the shift remains within the specified sets. This is reminiscent of the work in distributionally robust optimization with ambiguity sets   \citep{dellaporta2024distributionally,chen2026bulkcalibratedcredalambiguitysets}.

\paragraph{Choosing the contamination levels.}
The contamination levels \(\epsilon\) and \(\eta\) should be interpreted as robustness or sensitivity parameters. When domain information is available, \(\eta\) may be chosen as an upper bound on the expected fraction of corrupted, outlying, or distribution-shifted observations, while \(\epsilon\) encodes the analyst's tolerance to prior misspecification. In the absence of such information, we recommend reporting sensitivity curves over a grid of plausible values of \((\epsilon,\eta)\). When validation data are available, \(\eta\) can also be calibrated by choosing the smallest value for which the adjusted IHDR, or its Gaussian surrogate (see Section \ref{surrogate-sec}), attains the desired empirical predictive coverage on held-out data. Fully automatic selection of \((\epsilon,\eta)\), especially under arbitrary nonparametric contamination, requires additional modeling assumptions and is left for future work.

\begin{remark}[Role of Bayesian Sensitivity Analysis]
    The purpose of Bayesian sensitivity analysis is to assess the stability of inferential conclusions under departures from baseline modeling assumptions.
In our setting, given our ``ambiguity budgets'', we study how core predictive conclusions, such as predictive densities, uncertainty regions, and variance phase structure, change across this neighborhood.
This is a robustness objective---yielding conservative lower/upper posterior predictives---rather than an attempt to propose a new predictor that uniformly improves test error.
\end{remark}

A subtle but important technical point is that ensuring weak$^\star$ closedness---and hence the well-definedness of credal sets---may require working in the space of finitely additive probabilities \citep{walley}. While this generality can raise measure-theoretic complications, our analysis admits natural extremal contaminations (e.g., bounded spike densities concentrated near selected points), preserving an intuitive interpretation reminiscent of spike-and-slab behavior \citep{louzada}.

\textbf{Pessimistic generalized Bayes updating and predictive robustness.} Within the credal sets framework, a well-established method of carrying out posterior inference is via 
a set of posteriors, equivalently summarized by its lower and upper envelopes, called lower and upper posteriors, respectively. We adopt the update rule of a pessimistic generalized Bayesian agent \citep[Theorem~6.4.6]{walley}, which yields conservative (worst-case) posterior conclusions under the specified contamination sets. Our primary object of interest is predictive: the lower and upper posterior predictive distributions for a new response $\tilde{y}$ at a test input $\tilde{\x}$.

\textbf{Main results: tractable bounds for lower/upper posterior predictives.} Our first contribution is an explicit characterization of lower and upper densities associated with contamination credal sets, and their role in posterior updating. Building on this, we derive simple and interpretable bounds for the lower and upper posterior predictive densities induced by simultaneous prior and likelihood contamination. Let $\ell_{\mathrm{pred}}$ denote the classical RF posterior predictive density arising from the baseline Gaussian prior and likelihood (a one-dimensional normal distribution with mean given by the ridge RF predictor and variance given by the standard Bayesian RF expression \citep{baek2023asymptotics}). We prove that the \emph{lower} posterior predictive density $\underline{\ell}_{\mathrm{pred}}$ satisfies
\begin{equation*}
\underline{\ell}_{\mathrm{pred}} \le (1-\eta) \ell_{\mathrm{pred}},
\end{equation*}
and that the \emph{upper} posterior predictive density $\overline{\ell}_{\mathrm{pred}}$ satisfies a complementary inequality. 
These bounds highlight a simple phenomenon: likelihood contamination acts as an adversarial ``dilution'' of the baseline predictive density (and, in the upper case, an additive worst-case term), while prior contamination enters through the normalization and extremal updates. Importantly, when $\epsilon$ and $\eta$ are not too large, our derivations show that contaminating the prior and likelihood leads approximately to the same effect as directly contaminating the posterior predictive distribution, echoing recent observations in related ambiguity-set formulations \citep{dellaporta2024distributionally}.

\textbf{Uncertainty quantification via IHDRs.} A second contribution is a predictive uncertainty notion tailored to our imprecise predictive goal: the \emph{Imprecise Highest Density Region} (IHDR), defined as the smallest region that attains a target lower predictive probability mass. We show that, under the above predictive bounds, the IHDR can be approximated by a credible interval of the classical Gaussian posterior predictive distribution with an adjusted coverage level. This yields a practical, computable procedure for robust predictive intervals while preserving the analytic tractability of the RF Bayesian model.

\textbf{Moment consequences and asymptotics.} Finally, we connect these predictive density bounds to bounds on predictive dispersion. In particular, we upper bound the lower predictive variance by a simple rescaling of the classical predictive variance, and, under a mild truncation approximation, we obtain a corresponding lower bound for the upper predictive variance. Because the contamination parameters are constants, these variance bounds inherit the high-dimensional asymptotic behavior characterized in the RF literature \citep{baek2023asymptotics}. In other words, the Contamination RF model retains the same leading-order asymptotics as the baseline RF model, while providing a controlled robustness envelope around its predictions. 

\textbf{Contributions.}
In summary, our paper makes the following contributions:
\begin{itemize}
    \item We introduce the \emph{Contamination Random Features} model: a Bayesian RF formulation with $\epsilon$-contaminated priors and $\eta$-contaminated likelihoods represented as credal sets.
    \item We characterize lower and upper densities for contamination sets and derive tractable, explicit bounds on the lower and upper posterior predictive densities.
    \item We define predictive \emph{IHDRs} for imprecise posterior predictives and provide a practical approximation via adjusted Gaussian predictive credible intervals.
    \item We derive predictive variance bounds and show how these bounds align with known RF asymptotics in proportional-growth regimes.
\end{itemize}

\paragraph{Organization.}
The remainder of the paper is organized as follows. Section~\ref{sec:setup} introduces the RF model and its Bayesian interpretation. Section~\ref{sec:main} presents our main predictive bounds and their implications for IHDR construction. Sections~\ref{sec:variance} and \ref{doub-desc2} derive predictive variance bounds and discuss asymptotics. Finally, in Section~\ref{sec:experiments} we provide numerical experiments illustrating the predictive bounds and the preservation of double-descent structure under contamination. Section \ref{concl} concludes our work. Proofs and technical details are deferred to the appendix. 



\section{Random Feature (RF) Model}\label{sec:setup}
Let us briefly give a background on the classical Random Features Model. 
Denote inputs $\x_i\in\R^d,  i \in \{1,2,\ldots,n\}$, as i.i.d. samples drawn from a uniform measure on a $ d$-dimensional sphere with respect to the conventional Euclidean norm
\begin{equation*}
    \S^{d-1}(\sqrt{d}) := \{\x\in\R^{d}:\|\x\| = \sqrt{d}\}.
\end{equation*}
Let outputs $y_i$ be generated by the following model,
\begin{equation*}
    y_i = f_d(\x_i) + \gamma_i \text{,} \quad f_d(\x) = \beta_{d,0} + \langle\x,\bb_d\rangle + f_{d}^\text{NL}(\x).
\end{equation*}
The model is decomposed into a linear component, $\beta_{d,0} + \langle\x,\bb_d\rangle$, and a nonlinear component, $f_d^\text{NL}$. The random errors $\gamma_i$'s are assumed to be i.i.d. with mean zero, variance $\tau^2$, and finite fourth moment. We allow $\tau^2=0$, in which case the generating model is \emph{noiseless}.

We want to learn the optimal \emph{Random Features Model} that best fits the training data. This is a class of functions
\begin{equation}
    \mathcal{F} := \left\{f: f(\x) = \sum_{j=1}^{N} a_j\sigma(\langle\x,\bt_j\rangle/\sqrt{d})\right\},
    \label{eqn:rf_class}
\end{equation}
which is dependent on $N$ random features, $(\bt_j)_{j=1}^N$, which are drawn i.i.d. from $\mathbb{S}^{d-1}$,
and a nonlinear activation function, $\sigma$. The training objective solves a regularized least squares problem for the linear coefficients $\a\equiv(a_j)_{j=1}^{N}$,
\begin{align}\label{eqn:ridgep}
\begin{split}
    {\min}_{\a\in\R^N} \Bigg\{\sum_{i=1}^{n}&\left(y_i - \sum_{j=1}^{N} a_j\sigma\left(\langle \x_i,\theta_j\rangle/\sqrt{d}\right)\right)^2 \\ &+ d\psi_{1,d}\psi_{2,d}\lambda\|\a\|^2 \Bigg\},
\end{split}    
\end{align}
where $\psi_{1,d} = N/d$, $\psi_{2,d} = n/d$, and $\lambda > 0$. The optimal weights, $\widehat{\a}\equiv\widehat{\a}(\lambda)$, determine an optimal ridge predictor denoted by $\widehat{f}\equiv f(\cdot;\widehat{\a}(\lambda))$.\footnote{Given a new input feature $\tilde{\x}$, the ridge regularized predictor $\widehat{f}(\tilde{\x}) \equiv f(\tilde{\x};\widehat{\a})$ for the unknown output $\tilde{y}$ is given in \citet[Equation (5)]{baek2023asymptotics}. } The dependence of the trained predictor on the dataset $(\X,\y)$ and features $\bT$ are suppressed in notation.

The objective function \eqref{eqn:ridgep} of a classical Random Features (RF) model can be interpreted as a MAP estimation problem for an equivalent Bayesian model.\footnote{Here, a MAP has to be understood with respect to a posterior predictive distribution.} As we shall see more in detail later, the goal of this paper is to provide the Bayesian sensitivity analysis \citep{berger} version of the Bayesian RF model, via contamination sets \citep{huber}. We call it the {\em Contamination RF Model}.


Formally, we adopt a $d$-dependent weight prior distribution, denoted $p(\a)$, and also a normal likelihood, denoted $p(\y|\X,\bT,\a)$, centered around a function in the class \eqref{eqn:rf_class} with variance $\phi^{-1}$,
\begin{align}
    \a &\sim \mathcal{N}\left(0,\phi^{-1}\frac{\psi_{1,d}\psi_{2,d}\lambda}{d}\mathbf{I}_N\right), \label{normal_pr}\\
    \y \mid \X,\bT,\a &\sim \mathcal{N}\left(\sigma(\X\bT/\sqrt{d})\a,\phi^{-1}\mathbf{I}_n\right). \label{normal_lik}
\end{align}


\section{Bounding the Posterior Predictive Density}\label{sec:main}

We now turn to the central question of the paper: \emph{how does predictive inference in Bayesian random feature models change under explicit prior and likelihood misspecification?} To address this, we replace the single Gaussian prior and likelihood by contamination credal sets and analyze the resulting lower and upper posterior predictive.
To face such a possibility, we specify an $\epsilon$-contaminated prior credal set 
and an $\eta$-contaminated likelihood credal set. Formally, we let $P^\mathcal{N}$ denote the (countably additive) probability measure whose pdf is given by the Normal in \eqref{normal_pr}---which, for notational convenience, we denote by $p^\mathcal{N}$. Similarly, let $L^\mathcal{N} \equiv P(\cdot \mid \X,\bT,\a)$ denote the probability measure whose pdf is given by the Normal in \eqref{normal_lik}---which, for notational convenience, we denote by $\ell_{\a}^\mathcal{N} \equiv p(\cdot|\X,\bT,\a)$. In other words, given a sigma-finite dominating measure $\nu$ such as the Lebesgue one, we let $\text{d}P^\mathcal{N}/\text{d}\nu=p^\mathcal{N}$ and $\text{d}L^\mathcal{N}/\text{d}\nu=\ell_{\a}^\mathcal{N}$. Here, the ratios represent Radon-Nikodym derivatives. In the remainder of the paper, $\nu$ is left implicit for ease of exposition.

Pick any $\epsilon,\eta\in [0,1]$ and denote by $\Delta^\text{fa}_{\mathbb{R}^N}$ and $\Delta^\text{fa}_{\mathbb{R}^n}$ the space of finitely additive probability measures on $\mathbb{R}^N$ and $\mathbb{R}^n$, respectively. Then, the prior and likelihood contamination credal sets are the following,
\begin{align*}
    \mathcal{P}_\epsilon &\coloneqq \{P \in \Delta_{\mathbb{R}^N}^\text{fa} : P=(1-\epsilon)P^\mathcal{N} + \epsilon Q \text{, } \forall Q \in \Delta_{\mathbb{R}^N}^\text{fa}\},\\
    \mathcal{L}_\eta &\coloneqq \{L \in \Delta_{\mathbb{R}^n}^\text{fa} : L=(1-\eta)L^\mathcal{N} + \eta S \text{, } \forall S \in \Delta_{\mathbb{R}^n}^\text{fa}\}.
\end{align*}

\paragraph{Intuition for the contamination sets.}
For readers less familiar with imprecise probabilities, the main intuition is simple. The \(\epsilon\)-contaminated prior keeps a fraction \((1-\epsilon)\) of the baseline Gaussian prior \(P^{\mathcal N}\), while allowing the remaining \(\epsilon\) fraction to be placed adversarially. Therefore, for any proper measurable event \(A\), the smallest prior probability assigned to \(A\) is \((1-\epsilon)P^{\mathcal N}(A)\), while the largest is \((1-\epsilon)P^{\mathcal N}(A)+\epsilon\). The likelihood contamination set has the same interpretation, with \(\eta\) controlling the amount of likelihood misspecification. The finitely additive formulation below is used to make these credal sets mathematically closed and well-defined; the predictive bounds derived later depend only on the lower and upper envelope formulas.
 
 We need to consider finitely additive measures because, if we only considered countably additive ones, then the sets may be empty or not weak$^\star$ closed, and hence not well-defined credal sets \citep[Section 3]{caprio2025optimaltransportepsiloncontaminatedcredal}. It is easy to see that both $\mathcal{P}_\epsilon$ and $\mathcal{L}_\eta$ are convex \citep{marinacci2}. Recall that the lower envelope of a credal set is called {\em lower probability}. It is the smallest value that the elements of the credal set can take on; in our case, we have $\underline{P}\coloneqq \inf_{P\in\mathcal{P}_\epsilon} P$ and $\underline{L}\coloneqq \inf_{L\in\mathcal{L}_\eta} L$. The upper envelope is called {\em upper probability} (denoted by $\overline{P}$ and $\overline{L}$, respectively), and is defined analogously, with sup in place of inf. Notice that lower and upper probabilities are conjugate, that is, for all $A \in \mathcal{B}(\mathbb{R}^N)$, $\underline{P}(A)=1-\overline{P}(A^c)$, where $\mathcal{B}(\mathbb{R}^N)$ denotes the Borel sigma-algebra of $\mathbb{R}^N$, and similarly for the lower and upper likelihood. 
 The following is an important property of contamination sets, proved in \citet[Lemma 5]{caprio2025optimaltransportepsiloncontaminatedcredal}.\footnote{See also \citet{wasserman,walley}.}

\begin{lemma}[Characterizing the Contamination Sets]\label{char_lemma}
    The following are true,
    $$\underline{P}(A)=
    \begin{cases}
    (1-\epsilon)P^\mathcal{N}(A) & A \in \mathcal{B}(\mathbb{R}^N) \setminus \{\mathbb{R}^N\}\\
    1 & A=\mathbb{R}^N
    \end{cases} 
    $$
    and
$$\overline{P}(A)=
    \begin{cases}
    (1-\epsilon)P^\mathcal{N}(A) + \epsilon & A \in \mathcal{B}(\mathbb{R}^N) \setminus \{\emptyset\}\\
    0 & A=\emptyset
    \end{cases}.
    $$
    Similar characterization holds for $\overline{L}$ and $\underline{L}$. In addition,
    {\small \begin{align}\label{cores}
    \begin{split}
    \mathcal{P}_\epsilon &= \{P \in \Delta_{\mathbb{R}^N}^\text{fa} : P(A) \geq \underline{P}(A) \text{, } \forall A \in \mathcal{B}(\mathbb{R}^N)\} \eqqcolon \mathcal{M}(\underline{P}),\\
    \mathcal{L}_\eta &= \{L \in \Delta_{\mathbb{R}^n}^\text{fa} : L(B) \geq \underline{L}(B) \text{, } \forall B \in \mathcal{B}(\mathbb{R}^n)\} \eqqcolon \mathcal{M}(\underline{L}).
 \end{split}
\end{align}}
\end{lemma}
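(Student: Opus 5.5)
We prove the statement for the prior set $\mathcal{P}_\epsilon$ only; the likelihood case is word for word the same with $(\epsilon,P^\mathcal{N},\mathbb{R}^N)$ replaced by $(\eta,L^\mathcal{N},\mathbb{R}^n)$. Every element of $\mathcal{P}_\epsilon$ has the form $(1-\epsilon)P^\mathcal{N}+\epsilon Q$ with $Q \in \Delta^\text{fa}_{\mathbb{R}^N}$. So for each fixed $A$ we have
\[
\underline{P}(A)=(1-\epsilon)P^\mathcal{N}(A)+\epsilon\inf_Q Q(A), \qquad \overline{P}(A)=(1-\epsilon)P^\mathcal{N}(A)+\epsilon\sup_Q Q(A).
\]
The envelope formulas therefore reduce to computing $\inf_Q Q(A)$ and $\sup_Q Q(A)$ over all finitely additive probabilities.

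\textbf{Envelope formulas.} If $A\neq\mathbb{R}^N$, pick $x\in A^c$ and take $Q=\delta_x$, which is countably additive and hence finitely additive. Then $Q(A)=0$, so the infimum $0$ is attained and $\underline{P}(A)=(1-\epsilon)P^\mathcal{N}(A)$. If $A=\mathbb{R}^N$, every element has mass $1$. Symmetrically, if $A\neq\emptyset$, take $x\in A$ and $Q=\delta_x$, so $Q(A)=1$ attains the supremum and $\overline{P}(A)=(1-\epsilon)P^\mathcal{N}(A)+\epsilon$. If $A=\emptyset$, the value is $0$. The conjugacy $\underline{P}(A)=1-\overline{P}(A^c)$ follows as a consistency check.

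\textbf{Core identity.} The inclusion $\mathcal{P}_\epsilon\subseteq\mathcal{M}(\underline{P})$ is immediate from the definition of the lower envelope.

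For the converse, take $P\in\Delta^\text{fa}_{\mathbb{R}^N}$ with $P(A)\ge\underline{P}(A)$ for all $A$. If $\epsilon=0$, then $P(A)\ge P^\mathcal{N}(A)$ for every $A$. Applying this to $A^c$ gives $P(A)\le P^\mathcal{N}(A)$, so $P=P^\mathcal{N}$.

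If $\epsilon>0$, define
\[
Q:=\frac{P-(1-\epsilon)P^\mathcal{N}}{\epsilon}
\]
on $\mathcal{B}(\mathbb{R}^N)$. We must show $Q$ is a finitely additive probability:
\begin{itemize}
\item Finite additivity holds because both $P$ and $P^\mathcal{N}$ are finitely additive and $Q$ is a linear combination of them.
\item $Q(\mathbb{R}^N)=(1-(1-\epsilon))/\epsilon=1$.
\item $Q(\emptyset)=0$.
\item Nonnegativity: for a proper $A$, $Q(A)\ge0$ is exactly the hypothesis $P(A)\ge(1-\epsilon)P^\mathcal{N}(A)$. The remaining cases $A=\emptyset$ and $A=\mathbb{R}^N$ are covered by the two previous items.
\end{itemize}
Hence $P=(1-\epsilon)P^\mathcal{N}+\epsilon Q$ with $Q\in\Delta^\text{fa}_{\mathbb{R}^N}$, so $P\in\mathcal{P}_\epsilon$.

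\textbf{Main subtlety.} The only delicate point is bookkeeping. Nonnegativity of $Q$ needs the lower bound only on proper sets, and the exceptional value $\underline{P}(\mathbb{R}^N)=1$ imposes nothing beyond normalization, which $P$ already satisfies.

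Finite additivity is used only to ensure that this $Q$ lies in the ambient space even when $P$ is merely finitely additive. It is also why the set is well defined and weak$^\star$ closed, as the paper notes, though closedness is not needed for the identity itself.

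Note that $\overline{P}$ need not be defined by duality: it is computed directly above, and the matching characterization via upper bounds follows from conjugacy.
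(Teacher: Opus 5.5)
Your proof is correct and complete. The paper gives no in-text proof of this lemma. It imports the result from \citet[Lemma 5]{caprio2025optimaltransportepsiloncontaminatedcredal}, and re-invokes \citet[Example 3]{wasserman} for the envelope values in the proof of Lemma~\ref{low_up_dens}. So your argument is a self-contained replacement for that citation rather than a rival to a written proof.

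It is also consistent with how the paper uses the result. Your Dirac masses are concrete instances of the extremal contaminations $Q_{\star A}$ (with $Q_{\star A}(A)=0$) and $Q^\star_A$ (with $Q^\star_A(A)=1$) from Lemma~\ref{low_up_dens}, which the paper describes heuristically as Dirac masses. Your recovery $Q=(P-(1-\epsilon)P^\mathcal{N})/\epsilon$ checks the core identity directly, using only finite additivity of $P$ and $P^\mathcal{N}$. You handle $\epsilon=0$ separately, and you note that the exceptional value $\underline{P}(\mathbb{R}^N)=1$ imposes only normalization; that is all the bookkeeping the identity needs.

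One thing your route makes visible, which the citation does not: finite additivity plays no role in these two identities. The extremizers you use are countably additive. The recovered $Q$ is countably additive whenever $P$ is. So the finitely additive ambient space $\Delta^\text{fa}_{\mathbb{R}^N}$ matters only for the well-definedness and weak$^\star$-closedness issue the paper raises, not for the envelope formulas or the equality $\mathcal{P}_\epsilon=\mathcal{M}(\underline{P})$.
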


The sets $\mathcal{M}(\underline{P})$ and $\mathcal{M}(\underline{L})$ are called the {\em cores} of $\underline{P}$ and $\underline{L}$, respectively. They are defined as the collection of all (finitely additive) probabilities that set-wise dominate the lower probabilities. Thanks to Equation \eqref{cores}, we know that lower probabilities are sufficient to characterize the entire credal set---they are {\em compatible} with the credal sets \citep{gong}. 

To avoid unnecessary complications, we perpetrate abuse of notation and write $\underline{P}$ as the lower probability that assigns a value of $(1-\epsilon)P^\mathcal{N}(A)$ to all the elements $A$ of $\mathcal{B}(\mathbb{R}^N)$, and similarly for $\underline{L}$. We also write $\overline{P}$ as the upper probability that assigns a value of $(1-\epsilon)P^\mathcal{N}(A) + \epsilon$ to all the elements $A$ of $\mathcal{B}(\mathbb{R}^N)$, and similarly for $\overline{L}$. The interested reader can find more details around this choice in \citet[Section 3 and Appendix A]{caprio2025optimaltransportepsiloncontaminatedcredal}.

We now ask ourselves whether we are able to derive the upper and lower pdf's corresponding to the upper and lower probabilities, respectively. We find the answer to this question for $\underline{P}$ and $\overline{P}$, with the understanding that a similar argument holds also for $\underline{L}$ and $\overline{L}$.

\begin{lemma}[Lower and Upper Densities]\label{low_up_dens}
    The lower density $\underline{p}$ associated with lower probability $\underline{P}$ is such that, for all $A\in\mathcal{B}(\mathbb{R}^N)\setminus\{\mathbb{R}^N\}$, there exists a probability measure $Q_{\star A} \in \Delta_{\mathbb{R}^N}^\text{fa}$
    with $Q_{\star A}(A)=0$ and pdf $q_{\star A}$ such that
    $$\underline{P}(A)=\int_A [\underbrace{(1-\epsilon)p^\mathcal{N}(\a) + \epsilon q_{\star A}(\a)}_{\eqqcolon \underline{p}(\a)}] \text{d}\a.$$
    The upper density $\overline{p}$ associated with upper probability $\overline{P}$ is such that, for all $A\in\mathcal{B}(\mathbb{R}^N) \setminus \{\emptyset\}$, there exists a probability measure $Q_{A}^\star \in \Delta_{\mathbb{R}^N}^\text{fa}$ with $Q_{A}^\star(A)=1$ and pdf $q_{A}^\star$ such that 
    $$\overline{P}(A)=\int_A [\underbrace{(1-\epsilon)p^\mathcal{N}(\a) + \epsilon q_{A}^\star(\a)}_{\eqqcolon \overline{p}(\a)}] \text{d}\a.$$
\end{lemma}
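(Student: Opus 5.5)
The plan is to derive both density representations directly from the closed-form envelopes in Lemma \ref{char_lemma}, and then exhibit explicit contaminating measures $Q$ that attain them. The key observation is that each envelope value is attained by a specific member of $\mathcal{P}_\epsilon$ of the form $(1-\epsilon)P^\mathcal{N}+\epsilon Q$. Once such a $Q$ is chosen, integrating the mixed density over $A$ reproduces the envelope.

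\textbf{Lower density.} Fix $A\in\mathcal{B}(\mathbb{R}^N)\setminus\{\mathbb{R}^N\}$, so that $A^c\neq\emptyset$. Lemma \ref{char_lemma} gives $\underline{P}(A)=(1-\epsilon)P^\mathcal{N}(A)$. I would take $Q_{\star A}$ to be any (finitely additive) probability supported in $A^c$, so that $Q_{\star A}(A)=0$; a concrete choice is a point mass or, more in the spirit of the paper, a bounded spike density on a small region of $A^c$. Then $P_{\star A}:=(1-\epsilon)P^\mathcal{N}+\epsilon Q_{\star A}$ lies in $\mathcal{P}_\epsilon$ and satisfies $P_{\star A}(A)=(1-\epsilon)P^\mathcal{N}(A)=\underline{P}(A)$. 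Writing $q_{\star A}=\text{d}Q_{\star A}/\text{d}\nu$ with the convention that $q_{\star A}$ vanishes on $A$ gives
\[
\int_A \bigl[(1-\epsilon)p^\mathcal{N}+\epsilon q_{\star A}\bigr]\text{d}\a=(1-\epsilon)P^\mathcal{N}(A)+\epsilon\cdot 0 ,
\]
which is the claimed identity.

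\textbf{Upper density.} This is symmetric. For $A\neq\emptyset$, I would choose $Q_A^\star$ concentrated on $A$, so that $Q_A^\star(A)=1$. The resulting mixture then attains $(1-\epsilon)P^\mathcal{N}(A)+\epsilon=\overline{P}(A)$, and integrating the mixed density over $A$ yields $\overline{P}(A)$.

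\textbf{Main obstacle.} The difficulty is making sense of a ``pdf'' $q$ for $Q$. If $A$ (or $A^c$) is Lebesgue-null but nonempty, no countably additive $Q$ with a Lebesgue density can give it full mass. This is precisely why the statement allows $Q\in\Delta^\text{fa}$. I would handle it as follows.
\begin{enumerate*}[label=(\roman*)]
\item When the relevant set has positive Lebesgue measure, take $q$ to be the normalized indicator of a bounded positive-measure subset. This gives a genuine density.
\item Otherwise, interpret $q$ as a generalized density, namely a Dirac or a finitely additive ``density'' defined through the Radon--Nikodym-type representation used in \citet[Appendix A]{caprio2025optimaltransportepsiloncontaminatedcredal}. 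Equivalently, obtain it as a limit of spike densities concentrating near a point of the set, with the integral identity holding in the limit or via the finitely additive integral.
\end{enumerate*}
I would state this convention explicitly rather than grind through the measure theory. I would also note that $\underline{p}$ and $\overline{p}$ depend on $A$ through $q$, which is consistent with how they are used later for extremal updates.
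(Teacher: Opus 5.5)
Your proposal is correct and follows essentially the paper's route. The paper also reads off $\underline{P}(A)=(1-\epsilon)P^\mathcal{N}(A)$ and $\overline{P}(A)=(1-\epsilon)P^\mathcal{N}(A)+\epsilon$ from the closed-form envelopes (via Wasserman's Example~3, i.e.\ Lemma~\ref{char_lemma}), rewrites them as $(1-\epsilon)P^\mathcal{N}(A)+\epsilon Q(A)$ with $Q(A)=0$ resp.\ $Q(A)=1$, and splits the integral over $A$; you additionally construct $Q$ explicitly and say when a genuine Lebesgue density exists.

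One small caution: your ``equivalently, a limit of spike densities'' aside fails in exactly the Lebesgue-null cases it is meant to cover. Bounded spikes $s_k$ at $a_0\in A^c$ give $\int_A s_k=1$ for all $k$ when $A^c$ is null, and give $\int_A s_k=0$ when $A$ is null, and a Banach limit inherits these values. There you must keep the Dirac/generalized-density reading (or a dominating $\nu$ with an atom at $a_0$), which is also the paper's heuristic.
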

The reader may view Lemma~\ref{low_up_dens} as a formal device for representing the lower and upper envelope probabilities at the density level; the subsequent predictive results use only envelope bounds and do not depend on choosing a unique extremal density.

Three remarks are in order. First, notice that the choice of $Q_{\star A}$ and $Q_{A}^\star$ depends on the set $A$ that we consider, and so do the definitions of $\underline{p}(\a)$ and $\overline{p}(\a)$. 
It is easy to see, then, that the global lower density envelope is $(1-\epsilon) p^\mathcal{N}$. In general, no finite $A$-independent global upper density envelope exists without further restrictions on the contaminating densities; for instance, if the contaminating densities are $M$-bounded a.e., for some $M>0$, then $(1-\epsilon) p^\mathcal{N} + \epsilon M$ is a valid pointwise majorant.

Second, since $Q_{\star A},Q_{A}^\star \in \Delta_{\mathbb{R}^N}^\text{fa}$, it may well be the case that they are finitely additive, but not countably additive. In that case, the concept of pdf is only well-defined in the sense of a Banach limit (which may not be unique), or as the pdf of a Loeb measure (which requires nonstandard analysis). Fortunately enough, though, in the problem that we consider, we can avoid such complications. Indeed, at the set-function level, it is convenient to think of the extremizers heuristically as Dirac masses.

Third, throughout, whenever we write lower/upper {\em densities}, we implicitly restrict attention to an absolutely continuous subclass of contaminations, 
so that all density-valued quantities and Lebesgue integrals appearing below are well-defined in the ordinary sense. In particular, whenever we write $\delta_{\mathbf y}$ (or $\delta_{\tilde y}$) inside expressions for lower/upper densities (e.g., $\underline{\ell}_{\a}(\mathbf y)$ or $\overline{\ell}_{\a}(\mathbf y)$), we mean a bounded spike density $s_{\mathbf y}$ with respect to Lebesgue measure, satisfying $s_{\mathbf y}\ge 0$, $\int s_{\mathbf y}(\mathbf z)\, d\mathbf z=1$, and $\|s_{\mathbf y}\|_{\infty}<\infty$ (for instance, the uniform density on a small hyper-rectangle centered at $\mathbf y$). We keep the notation $\delta_{\mathbf y}$ for readability. Similarly, whenever we write $\delta_{\a_\star}$ and $\delta_{\a^\star}$ for lower/upper prior densities, we mean bounded spike densities concentrated in a small neighborhood of $\a_\star$ and $\a^\star$, respectively.

\begin{remark}[On Envelope Densities]\label{rem:env-densities}
In what follows we derive (i) an \emph{upper bound} for the lower posterior predictive envelope density
$\underline{\ell}_{\mathrm{pred}}$ and (ii) a \emph{lower bound} for the upper posterior predictive envelope density
$\overline{\ell}_{\mathrm{pred}}$.
Consequently, we do \emph{not} require the existence of a finite, $A$-independent pointwise upper envelope (density majorant)
for the contaminating prior densities.
All bounds are obtained at the level of lower/upper probabilities 
and therefore do not
depend on such a pointwise majorant.
\end{remark}

Recall now that the optimal ridge weights $\hat{\a}$ that solve \eqref{eqn:ridgep} coincide with the posterior mean of the weight vector $\a$ under prior \eqref{normal_pr} and likelihood \eqref{normal_lik}. Consequently, given a new input feature $\tilde{\x}$, the posterior predictive distribution of $\tilde{y}$ induced by \eqref{normal_pr} and \eqref{normal_lik} has mean $\hat f(\tilde{\x})$. Hence, we are now interested in deriving the lower posterior predictive density.
Notice that we assume that the learner is a {\em pessimistic generalized Bayesian} agent \citep[Theorem 6.4.6]{walley}, that is, they update their beliefs in the form of an updated lower probability as
\begin{align}\label{pessim-gen-bayes}
    \underline{P}(\a \in A \mid \X,\bT,\y\in B)=\frac{\underline{P}(A) \underline{P}(B \mid \X,\bT,A)}{\overline{P}(B \mid \X,\bT)},
\end{align}
with $A \in \mathcal{B}(\mathbb{R}^N) \text{, } B \in \mathcal{B}(\mathbb{R}^n)$. For other types of updates, we refer the interested reader to \citet{walley,gong,teddy_me}; here, we limit ourselves to pointing out that pessimistic generalized Bayesian updating is a lower bound for both Walley’s classical generalized Bayesian updating \citep{walley} and the geometric updating rule \citep{suppes-zanotti}.\footnote{We will study its coherence \citep[Section 2.5]{walley} in future work.}

Let $\ell_\text{pred} \equiv p(\cdot \mid \tilde{\x},\y,\X,\bT)$ denote the density associated with the posterior predictive distribution ``induced'' by the Bayesian RF model encoded in \eqref{normal_pr} and \eqref{normal_lik}, and a new input $\tilde{\x}$. It is a one-dimensional Normal $\mathcal{N}(\hat{f}(\tilde{\x}),s^2(\tilde{\x}))$ whose parameters are given in \citet[Section 2.2]{baek2023asymptotics}.


\begin{theorem}[Bounding the Lower Posterior Predictive Density]\label{main_th}
    Let $\ell_\text{pred}$ denote the pdf of the one-dimensional posterior predictive Normal distribution 
    $\mathcal{N}(\hat{f}(\tilde{\x}),s^2(\tilde{\x}))$. Then, the following is true for all $\tilde y \in \mathbb R$,
    $$\underline{\ell}_\text{pred}(\tilde y) \leq (1-\eta) \ell_\text{pred} (\tilde y),$$
    where $\underline{\ell}_\text{pred}$ denotes the lower posterior predictive density.
\end{theorem}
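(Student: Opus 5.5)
The plan is to write the lower posterior predictive density as an integral of a lower predictive likelihood for $\tilde y$ against the lower posterior on $\a$ produced by the pessimistic generalized Bayesian rule \eqref{pessim-gen-bayes}, and then bound its two factors separately. Concretely, I would define
\begin{equation*}
\underline{\ell}_\text{pred}(\tilde y)=\int \underline{\ell}_{\a}(\tilde y)\,\underline{p}(\a\mid \X,\bT,\y)\,\text{d}\a,
\end{equation*}
where $\underline{\ell}_{\a}(\tilde y)$ is the lower density of the likelihood credal set $\mathcal{L}_\eta$ at the test point, and $\underline{p}(\a\mid\X,\bT,\y)$ is the density of the updated lower probability in \eqref{pessim-gen-bayes}. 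By Lemma~\ref{char_lemma} (and its likelihood analogue), together with the first remark after Lemma~\ref{low_up_dens}, the global lower density envelope of the likelihood is $(1-\eta)\ell^\mathcal{N}_{\a}$. Hence $\underline{\ell}_{\a}(\tilde y)\le(1-\eta)\ell^\mathcal{N}_{\a}(\tilde y)$ pointwise. Any other $A$-dependent lower density of the form $(1-\eta)\ell^\mathcal{N}_{\a}+\eta s$ with $s$ vanishing on the relevant set gives the same value wherever the spike is absent.

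The second step is to show that the updated lower posterior density is dominated by the baseline Gaussian posterior $p(\a\mid\X,\bT,\y)$. Plugging the envelopes into \eqref{pessim-gen-bayes}, the numerator is at most $(1-\epsilon)p^\mathcal{N}(\a)(1-\eta)\ell^\mathcal{N}_{\a}(\y)$. For the denominator, I would lower bound the upper marginal $\overline{P}(B\mid\X,\bT)$ by
\begin{equation*}
\int (1-\epsilon)p^\mathcal{N}(\a)\big[(1-\eta)\ell^\mathcal{N}_{\a}(\y)+\eta\,\overline{s}(\y)\big]\text{d}\a+\epsilon(\cdots)\ \ge\ (1-\epsilon)(1-\eta)m^\mathcal{N}(\y),
\end{equation*}
where $m^\mathcal{N}$ is the Gaussian marginal likelihood. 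The contamination terms are nonnegative, so dropping them is legitimate. Taking the ratio gives $\underline{p}(\a\mid\X,\bT,\y)\le p^\mathcal{N}(\a)\ell^\mathcal{N}_{\a}(\y)/m^\mathcal{N}(\y)=p(\a\mid\X,\bT,\y)$, and the factors $(1-\epsilon)(1-\eta)$ cancel.

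Combining the two steps, and using that both integrands are nonnegative, gives
\begin{equation*}
\underline{\ell}_\text{pred}(\tilde y)\le(1-\eta)\int \ell^\mathcal{N}_{\a}(\tilde y)\,p(\a\mid\X,\bT,\y)\,\text{d}\a=(1-\eta)\ell_\text{pred}(\tilde y).
\end{equation*}
The last equality is the standard Gaussian conjugacy computation yielding $\mathcal{N}(\hat f(\tilde{\x}),s^2(\tilde{\x}))$ \citep[Section 2.2]{baek2023asymptotics}.

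The main obstacle is the second step: making the passage from set-function updating \eqref{pessim-gen-bayes} to densities rigorous. The lower and upper densities of Lemma~\ref{low_up_dens} are $A$- and $B$-dependent, and the updated lower probability is not additive. I would handle this by working with small sets $A\ni\a$ and $B\ni\y$, using the bounded-spike convention so that all quantities are ordinary Lebesgue integrals, and taking ratios as the sets shrink. The only properties I need are that lower densities are dominated by the baseline $(1-\cdot)$-scaled Gaussian, and that upper densities dominate it. I would also make sure the $\epsilon$-contamination in the denominator is only ever dropped, never needed, so the bound holds uniformly over all contaminating $Q$ and $S$.
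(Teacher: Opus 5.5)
Your proposal is correct and follows essentially the same route as the paper. Both use the density-level pessimistic update with numerator $(1-\epsilon)p^\mathcal{N}(\a)(1-\eta)\ell^\mathcal{N}_{\a}(\y)$ and the global lower envelope $(1-\eta)\ell^\mathcal{N}_{\a}(\tilde y)$ at the test point, then reduce to the Gaussian posterior predictive. The one difference is that the paper evaluates the denominator exactly as a Choquet integral against the $2$-alternating $\overline{P}$, getting $(1-\epsilon)(1-\eta)p(\y\mid\X,\bT)+(1-\epsilon)\eta\delta_{\y}+\epsilon\lamed$, and discards the two contamination terms only at the end, whereas you drop them immediately; this gives the same inequality but not the paper's side remark that it is nearly an equality when those terms are negligible.
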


In the following corollary, we bound the upper posterior predictive density. 





\begin{corollary}[Bounding the Upper Posterior Predictive Density]\label{cor-1}
    The following is true for all $\tilde{y}\in\mathbb{R}$,
    \[
    \overline{\ell}_\text{pred}(\tilde{y}) \ge (1-\eta) \ell_\text{pred}(\tilde{y}) + \eta u(\tilde{y}),
    \]
    where $\overline{\ell}_\text{pred}$ denotes the upper posterior predictive density and
    $u$ is any probability density on $\mathbb{R}$ (for convenience, one may take $u$ bounded).
\end{corollary}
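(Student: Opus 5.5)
The plan is to mirror the argument behind Theorem~\ref{main_th}, replacing lower envelopes by upper ones and keeping track of the additive contamination term. The upper posterior predictive density is the supremum, over admissible prior and likelihood contaminations, of the predictive density they induce. So to obtain a lower bound on $\overline{\ell}_\text{pred}$ it is enough to exhibit, for each fixed $\tilde y$, one element (or a limiting family of elements) of the credal sets whose predictive density at $\tilde y$ is at least $(1-\eta)\ell_\text{pred}(\tilde y)+\eta u(\tilde y)$. By Remark~\ref{rem:env-densities}, we never need an $A$-independent upper density majorant. We only need to work at the level of the upper probabilities given in Lemma~\ref{char_lemma}, together with the density representation of Lemma~\ref{low_up_dens}.

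First, I write the predictive for a new response as a likelihood over $(\y,\tilde y)$ integrated against the updated prior. The likelihood contamination for $\tilde y$ is $(1-\eta)\ell^\mathcal{N}_{\a}(\tilde y)+\eta s(\tilde y)$ with $s\in\Delta_{\mathbb{R}}^\text{fa}$ arbitrary. I choose $s=u$, which is an absolutely continuous, bounded density and hence an admissible element of $\mathcal{L}_\eta$ in the absolutely continuous subclass described after Lemma~\ref{low_up_dens}. Because $u$ does not depend on $\a$, integrating over any posterior on $\a$ gives
\begin{align*}
(1-\eta)\int \ell^\mathcal{N}_{\a}(\tilde y)\,\pi(\a\mid\cdot)\,\text{d}\a+\eta u(\tilde y).
\end{align*}
Second, I choose the prior and training-likelihood contaminations to be the baseline ones, i.e. $Q=P^\mathcal{N}$ and $S=L^\mathcal{N}$; both belong to the credal sets because $P^\mathcal{N}\in\mathcal{P}_\epsilon$ and $L^\mathcal{N}\in\mathcal{L}_\eta$. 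With these choices the posterior on $\a$ is the Gaussian one, and its integral against $\ell^\mathcal{N}_{\a}(\tilde y)$ is exactly $\ell_\text{pred}(\tilde y)$. Taking the supremum then yields the claim, since the resulting density $(1-\eta)\ell_\text{pred}+\eta u$ is one member of the set whose upper envelope defines $\overline{\ell}_\text{pred}$.

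The main obstacle is reconciling this "exhibit one member" argument with the pessimistic generalized Bayes rule \eqref{pessim-gen-bayes}. There, the updated object is a ratio of a lower numerator to an upper normalizer, not a pointwise supremum of Bayes posteriors. If $\overline{\ell}_\text{pred}$ is defined through the conjugate of \eqref{pessim-gen-bayes}, I would proceed as follows. I take $A$ to be a small interval around $\tilde y$ and apply conjugacy, $\overline{P}(A)=1-\underline{P}(A^c)$. I then use the upper characterization from Lemma~\ref{char_lemma} on the predictive event, namely upper mass $=(1-\eta)\times$ (baseline mass) $+\,\eta$, together with the fact that the contamination can place this $\eta$ mass according to $u$, as in Lemma~\ref{low_up_dens} with $q^\star_A$ supported in $A$. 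This gives an upper predictive mass of at least $(1-\eta)\int_A\ell_\text{pred}+\eta\int_A u$. Dividing by $|A|$ and shrinking $A$ (using continuity of $\ell_\text{pred}$ and Lebesgue differentiation for $u$) yields the pointwise density bound. I expect to have to check that the normalization in \eqref{pessim-gen-bayes} does not reduce the $(1-\eta)$ factor. It is here that I would invoke the same normalization handling as in the proof of Theorem~\ref{main_th}, bounding the denominator by its baseline value.
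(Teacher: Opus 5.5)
Your closing computation matches the paper's. Once the updated density on $\a$ is known to dominate the classical Gaussian posterior $p(\a\mid\y,\X,\bT)$ pointwise, integrating $(1-\eta)\ell^{\mathcal N}_{\a}(\tilde y)+\eta u(\tilde y)$ against it gives the bound, because $u$ does not depend on $\a$. The gap is in how you obtain that domination.

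Choosing $Q=P^{\mathcal N}$ and $S=L^{\mathcal N}$ proves the inequality for the upper envelope of elementwise Bayes predictives. Under that definition your argument is complete and simpler than the paper's. However, the paper builds $\overline{\ell}_\text{pred}$ from an \emph{optimistic} generalized Bayes update,
$\overline{p}(\a\mid\y,\X,\bT)=\overline{p}(\a)\,\overline{\ell}_{\a}(\y)/\inf_{P\in\mathcal{P}_\epsilon}\int\underline{\ell}_{\a}(\y)\,P(\mathrm{d}\a)$, i.e.\ an upper numerator over a lower normalizer. It does not use \eqref{pessim-gen-bayes} or its conjugate.

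The bridge you sketch does not close this gap.
\begin{itemize}
\item Under the pessimistic rule, the normalizer is the upper marginal $(1-\epsilon)(1-\eta)p(\y\mid\X,\bT)+(1-\epsilon)\eta\delta_{\y}+c$. This does shrink the $(1-\eta)$ factor; it is exactly the ratio bounded by $1$ at the end of the proof of Theorem~\ref{main_th}. So the check you anticipate would fail.
\item Under the optimistic rule, the numerator is only guaranteed to be at least $(1-\epsilon)(1-\eta)p^{\mathcal N}(\a)\ell^{\mathcal N}_{\a}(\y)$. Bounding the normalizer by the baseline marginal $p(\y\mid\X,\bT)$ then gives only $\overline{p}(\a\mid\y,\X,\bT)\ge(1-\epsilon)(1-\eta)\,p(\a\mid\y,\X,\bT)$. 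Your final bound would be weaker by the factor $(1-\epsilon)(1-\eta)$.
\end{itemize}

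The missing step is to compute the lower normalizer exactly. Because $\underline{P}$ is $2$-monotone, $\inf_{P\in\mathcal{P}_\epsilon}\int\underline{\ell}_{\a}(\y)P(\mathrm{d}\a)$ equals the Choquet integral $\int_0^{\infty}\underline{P}(\{\a:\underline{\ell}_{\a}(\y)\ge t\})\,\mathrm{d}t$. On every superlevel set the $\epsilon Q_{\star}$ term of Lemma~\ref{low_up_dens} vanishes, so the normalizer is exactly $(1-\epsilon)(1-\eta)p(\y\mid\X,\bT)$. Intuitively, the infimum pushes the $\epsilon$ prior mass to where the likelihood is negligible. This factor cancels against the one in the numerator, giving $\overline{p}(\a\mid\y,\X,\bT)\ge p(\a\mid\y,\X,\bT)$ pointwise, which is what your membership argument presupposes.

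Your conjugacy fallback is not a substitute.
\begin{itemize}
\item Conjugacy together with Theorem~\ref{main_th} gives $\overline{P}_\text{pred}(A)\ge\eta+(1-\eta)P_\text{pred}(A)$. The $\eta$ term does not shrink with $A$, so $\overline{P}_\text{pred}(A)/|A|\to\infty$. The density bound then holds only vacuously, not for a finite density of the kind the paper works with.
\item Lemma~\ref{char_lemma} describes the prior and likelihood credal sets, not the updated predictive. It cannot be invoked directly on the predictive event.
\end{itemize}
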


As we can see from Theorem \ref{main_th} and Corollary \ref{cor-1}, if the prior and likelihood contaminations are not too extreme, contaminating prior and likelihood leads approximately to the same result as directly contaminating the posterior predictive distribution. 
This is conceptually related to the Bayesian ambiguity set perspective of \citet{dellaporta2024distributionally}, where posterior-informed ambiguity at the model level is used for downstream distributionally robust optimization. Our focus is different: rather than optimizing a worst case decision objective, we study robust posterior predictive inference for Bayesian random features, deriving explicit lower/upper predictive density bounds, IHDR approximations, and variance envelopes that preserve the double descent structure in proportional growth regimes.

\subsection{Approximating the Imprecise Highest Density Region}\label{surrogate-sec}

The lower posterior predictive density $\underline{\ell}_\text{pred}$ can be used to define the lower posterior predictive distribution as $\underline{P}(\tilde{y} \in B \mid \tilde{x},\y,\X,\bT)\coloneqq \int_B \underline{\ell}_\text{pred}(\tilde{y}) \text{d}\tilde{y}$, for all $B \in \mathcal{B}(\mathbb{R})\setminus \{\mathbb{R}\}$, and $\underline{P}(\tilde{y} \in \mathbb{R} \mid \tilde{x},\y,\X,\bT)=1$. Just as in the proof of Theorem \ref{main_th}, it can be routinely checked that $\underline{P}_\text{pred} \equiv \underline{P}(\cdot \mid \tilde{x},\y,\X,\bT)$ is a well-defined lower probability. In turn, we can use such a lower probability to derive a predictive Imprecise Highest Density Region \citep{coolen}.

\begin{definition}[Imprecise Highest Density Region---IHDR]\label{ihdr-def}
   Let $\alpha$ be any value in $[0,1]$. Then, the set $\text{IR}_\alpha[\mathcal{M}(\underline{P}_\text{pred})] \in \mathcal{B}(\mathbb{R})$ is called a $(1-\alpha)$-Imprecise Highest Density Region (IHDR) if 
    \begin{enumerate}
        \item \label{cond-1} $\underline{P}_\text{pred}\left(\{\tilde{y}\in \text{IR}_\alpha[\mathcal{M}(\underline{P}_\text{pred})]\}\right)= 1-\alpha$;
        \item \label{cond-2} $\int_{\text{IR}_\alpha[\mathcal{M}(\underline{P}_\text{pred})]} \text{d}\tilde{y}$ is a minimum. 
    \end{enumerate}
\end{definition}

Notice that Condition \ref{cond-2} is needed so that $\text{IR}_\alpha[\mathcal{M}(\underline{P}_\text{pred})]$ is the smallest possible subset of $\mathbb{R}$, which still satisfies Condition \ref{cond-1}. By the definition of lower probability, 
Definition \ref{ihdr-def} implies that $P_\text{pred}( \{\tilde{y}\in \text{IR}_\alpha[\mathcal{M}(\underline{P}_\text{pred})]\})\geq 1-\alpha$, for all ${P}\in\mathcal{M}(\underline{P}_\text{pred})$. Here lies the appeal of the IHDR concept. The following shows that it easy to approximate the IHDR.


\begin{proposition}[Approximating the IHDR]\label{char-ihdr}
    Pick any $\alpha \in [0,1]$.
    Let
    \(
    \beta\coloneqq \min\{1,\frac{1-\alpha}{1-\eta}\}.
    \)
    If $\text{IR}_\alpha[\mathcal{M}(\underline{P}_\text{pred})]$ exists, then
    \[
    \int_{R_\beta[P_\text{pred}]} \mathrm{d}\tilde{y}
    \le
    \int_{\text{IR}_\alpha[\mathcal{M}(\underline{P}_\text{pred})]} \mathrm{d}\tilde{y},
    \]
    where $R_\beta[P_\text{pred}]$ denotes the $\beta$-level credible interval of the posterior predictive distribution $\ell_\text{pred}=\mathcal{N}(\hat{f}(\tilde{\x}),s^2(\tilde{\x}))$.
    In particular, $R_\beta[P_\text{pred}]$ is a computable Gaussian surrogate for the IHDR.
\end{proposition}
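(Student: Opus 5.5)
Let $I\coloneqq\text{IR}_\alpha[\mathcal{M}(\underline{P}_\text{pred})]$, assumed to exist. The plan is to turn Condition \ref{cond-1} of Definition \ref{ihdr-def} into a lower bound on the Gaussian mass of $I$, and then use the fact that for a unimodal symmetric density the highest density (equal-tailed) interval is the shortest set carrying a given mass.

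\textbf{Step 1 (trivial cases).} If $I=\mathbb{R}$, the inequality is immediate, because the right-hand side is $+\infty$. So assume $I\neq\mathbb{R}$; then $\underline{P}_\text{pred}(I)=\int_I \underline{\ell}_\text{pred}\,\mathrm{d}\tilde y$. If $\beta=1$, the interval $R_1[P_\text{pred}]$ would be $\mathbb{R}$. In that case I would argue that $\eta\ge\alpha$ forces $\alpha=\eta$, via Step 2: $1-\alpha\le(1-\eta)P_\text{pred}(I)\le 1-\eta$. When $\alpha=\eta$ we need $P_\text{pred}(I)=1$ with $I\ne\mathbb{R}$, which has infinite Lebesgue measure anyway (a set of full Gaussian mass has full Lebesgue complement-null measure). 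So in either case the bound holds. For $\beta<1$ we are in the main case, $\beta=(1-\alpha)/(1-\eta)$.

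\textbf{Step 2 (mass transfer via Theorem \ref{main_th}).} Integrate the pointwise bound $\underline{\ell}_\text{pred}\le(1-\eta)\ell_\text{pred}$ from Theorem \ref{main_th} over $I$:
\[
1-\alpha=\underline{P}_\text{pred}(I)=\int_I\underline{\ell}_\text{pred}(\tilde y)\,\mathrm{d}\tilde y\le(1-\eta)\int_I\ell_\text{pred}(\tilde y)\,\mathrm{d}\tilde y=(1-\eta)P_\text{pred}(I).
\]
Hence $P_\text{pred}(I)\ge\beta$. This is the key step, and it uses only the envelope bound, in line with Remark \ref{rem:env-densities}.

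\textbf{Step 3 (Neyman--Pearson / bathtub argument).} Let $R_\beta=R_\beta[P_\text{pred}]=\{\tilde y:\ell_\text{pred}(\tilde y)\ge k\}$. This is the symmetric interval around $\hat f(\tilde{\x})$ with $P_\text{pred}(R_\beta)=\beta$. I would show that any Borel $A$ with $P_\text{pred}(A)\ge\beta$ satisfies $\lambda(A)\ge\lambda(R_\beta)$, where $\lambda$ denotes Lebesgue measure. Start from
\[
\int_{A\setminus R_\beta}\ell_\text{pred}\ge\int_{R_\beta\setminus A}\ell_\text{pred}.
\]
On $A\setminus R_\beta$ we have $\ell_\text{pred}<k$, and on $R_\beta\setminus A$ we have $\ell_\text{pred}\ge k$. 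Therefore
\[
k\,\lambda(A\setminus R_\beta)\ge k\,\lambda(R_\beta\setminus A).
\]
Since $k>0$ (we have $\beta<1$), this gives $\lambda(A)\ge\lambda(R_\beta)$. Applying this with $A=I$ yields the claim.

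The main obstacle is bookkeeping rather than depth. The boundary case $\beta=1$ needs care, and one has to make sure that the abuse-of-notation convention for $\underline{P}_\text{pred}$ on $\mathbb{R}$ versus proper subsets does not break Step 2. I would handle this by the case split in Step 1.
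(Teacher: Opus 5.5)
Your proof is correct and follows the paper's route. You integrate the pointwise bound of Theorem~\ref{main_th} over the IHDR to get $P_\text{pred}(I)\ge\beta$ (which also forces $\alpha\ge\eta$), then appeal to length-minimality of the Gaussian HDR, and you dispose of the degenerate full-mass cases separately. Your Step~3 bathtub argument is a worthwhile tightening rather than a detour: the paper calls the IHDR ``feasible'' for the problem defining $R_\beta[P_\text{pred}]$, whose constraint is mass exactly $\beta$, whereas you prove directly that the superlevel-set interval has the smallest Lebesgue measure among all Borel sets of mass at least $\beta$.
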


\paragraph{Empirical choice of \(\eta\) for IHDR calibration.}
The adjusted IHDR approximation depends on the likelihood-contamination level \(\eta\). When labeled calibration data are available, \(\eta\) can be chosen empirically. For a target coverage level \(1-\alpha\), one may evaluate the Gaussian surrogate intervals
\(
C_{\eta}(x)
\coloneqq 
R_{\beta_{\eta}}\!\left[P_{\mathrm{pred}}(\cdot\mid x)\right] \), \(\beta \equiv \beta_{\eta}
\coloneqq 
\min\left\{1,\frac{1-\alpha}{1-\eta}\right\}
\), 
over a grid of values \(\eta\in\mathcal G\), and select the smallest \(\eta\) such that
\[
\frac{1}{m}\sum_{i=1}^{m}
\mathbf 1\{y_i^{\mathrm{cal}}\in C_{\eta}(x_i^{\mathrm{cal}})\}
\ge 1-\alpha,
\]
where $\mathbf 1\{\cdot \}$ denotes the indicator function. Among values satisfying this coverage constraint, one may further minimize average interval width or a proper interval score. Under arbitrary nonparametric contamination, \(\eta\) is not identifiable from unlabeled data alone without additional assumptions on the contamination mechanism. If no labeled calibration data or prior knowledge about misspecification are available, then \(\eta\) should be treated as a sensitivity parameter rather than as an identifiable quantity; in that case, the recommended practice is to report predictive envelopes over a range of plausible contamination levels.



\section{Bounding the Variance}\label{sec:variance}

In this section, we study bounds for the lower and upper predictive variances of the contaminated RF model. As a byproduct, we are able to state a result on the double descent behavior of our robust model.

Inspired by \citet[Appendix G.1]{walley}, we define the {\em lower predictive variance} as

\begin{align*}
\begin{split}    \mathbb{V}_{\underline{\ell}_\text{pred}}(\tilde{Y}) &\coloneqq \min_{\zeta\in\mathbb{R}} \mathbb{E}_{\underline{\ell}_\text{pred}}[(\tilde{Y}-\zeta)^2]\\
    &=\min_{\zeta\in\mathbb{R}} \int_{\mathbb{R}} [\tilde y - \zeta]^2 \underline{\ell}_\text{pred} (\tilde y) \text{d} \tilde y.
\end{split}
\end{align*}


Then, we have the following.

\begin{proposition}[Bounding the Lower Predictive Variance]\label{prop-var}
Let $\tilde Y$ denote the predictive quantity at $\tilde x$.
Under the classical posterior predictive, $\tilde Y$ has density
$\ell_\text{pred}=\mathcal N(\hat f(\tilde x),s^2(\tilde x))$,
whereas under the lower posterior predictive it has density $\underline{\ell}_\text{pred}$. Then, the following is true,
    $$\mathbb{V}_{\underline{\ell}_\text{pred}}(\tilde{Y}) \leq (1-\eta) \mathbb{V}_{{\ell}_\text{pred}}(\tilde{Y}).
    $$
\end{proposition}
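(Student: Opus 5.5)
The plan is to combine the pointwise density bound of Theorem~\ref{main_th} with the variational definition of the lower predictive variance. The key observation is that $\mathbb{V}_{\underline{\ell}_\text{pred}}(\tilde Y)$ is a minimum over $\zeta\in\mathbb{R}$. It is therefore bounded above by the value of the objective at any particular $\zeta$. The natural choice is the classical predictive mean $\zeta=\hat f(\tilde{\x})$, which is the minimizer for the baseline density $\ell_\text{pred}$.

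The steps are as follows. First,
\begin{equation*}
\mathbb{V}_{\underline{\ell}_\text{pred}}(\tilde Y)\le \int_{\mathbb{R}} [\tilde y-\hat f(\tilde{\x})]^2\,\underline{\ell}_\text{pred}(\tilde y)\,\text{d}\tilde y.
\end{equation*}
Second, the integrand $[\tilde y-\hat f(\tilde{\x})]^2$ is nonnegative. By Theorem~\ref{main_th}, $\underline{\ell}_\text{pred}(\tilde y)\le(1-\eta)\ell_\text{pred}(\tilde y)$ for every $\tilde y$, so monotonicity of the Lebesgue integral gives
\begin{equation*}
\int_{\mathbb{R}} [\tilde y-\hat f(\tilde{\x})]^2\,\underline{\ell}_\text{pred}(\tilde y)\,\text{d}\tilde y\le (1-\eta)\int_{\mathbb{R}} [\tilde y-\hat f(\tilde{\x})]^2\,\ell_\text{pred}(\tilde y)\,\text{d}\tilde y=(1-\eta)s^2(\tilde{\x}).
\end{equation*}
Third, for the Gaussian $\mathcal{N}(\hat f(\tilde{\x}),s^2(\tilde{\x}))$ the map $\zeta\mapsto\mathbb{E}_{\ell_\text{pred}}[(\tilde Y-\zeta)^2]=s^2(\tilde{\x})+(\hat f(\tilde{\x})-\zeta)^2$ is minimized at $\zeta=\hat f(\tilde{\x})$. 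Hence $s^2(\tilde{\x})=\mathbb{V}_{\ell_\text{pred}}(\tilde Y)$, and chaining the inequalities gives the claim.

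The main obstacle is well-posedness rather than the inequality itself. One must check that $\underline{\ell}_\text{pred}$ is a nonnegative measurable function, so that the integral makes sense and the minimum over $\zeta$ exists or can be read as an infimum. The upper bound holds in either reading, since an infimum is dominated by any particular evaluation. Nonnegativity and measurability follow from the construction in the proof of Theorem~\ref{main_th}, where the lower predictive density is built from the lower envelopes $(1-\epsilon)p^{\mathcal N}$ and $(1-\eta)\ell^{\mathcal N}_{\a}$ together with bounded spike densities. Finiteness of the integral is automatic, because it is dominated by $(1-\eta)s^2(\tilde{\x})<\infty$. Note also that $\underline{\ell}_\text{pred}$ need not integrate to one, so $\mathbb{V}_{\underline{\ell}_\text{pred}}$ is a variance-like functional of a sub-probability density. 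The argument never uses normalization, so this causes no difficulty.
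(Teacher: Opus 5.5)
Your proof is correct and follows essentially the same route as the paper. Both arguments combine the pointwise bound $\underline{\ell}_\text{pred}\le(1-\eta)\ell_\text{pred}$ from Theorem~\ref{main_th} with nonnegativity of $(\tilde y-\zeta)^2$; the only cosmetic difference is that the paper compares the two objectives for every $\zeta$ and then minimizes both sides, whereas you evaluate the left-hand minimum at the Gaussian minimizer $\zeta=\hat f(\tilde{\x})$, which gives the same inequality.
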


Keeping the same notation, under an additional assumption, we can also bound the upper predictive variance, whose definition is once again inspired by  \citet[Appendix G.1]{walley},
\begin{equation*}
    \mathbb{V}_{\overline{\ell}_\text{pred}}(\tilde{Y}) \coloneqq \min_{\zeta\in\mathbb{R}} \mathbb{E}_{\overline{\ell}_\text{pred}}[(\tilde{Y}-\zeta)^2].
\end{equation*}


\begin{proposition}[Bounding the Upper Predictive Variance]\label{prop-var2}
Suppose we approximate the classical RF posterior predictive distribution
${\ell}_\text{pred}=\mathcal{N}(\hat{f}(\tilde{\x}),s^2(\tilde{\x}))$ with a truncated version
${\ell}_\text{pred}^\prime=\mathcal{N}_{[a,b]}(\hat{f}(\tilde{\x}),s^2(\tilde{\x}))$, where
$[a,b]\subset \mathbb{R}$, $a<b$. Let
\[
m \coloneqq \int_a^b \ell_\text{pred}(\tilde y)\,d\tilde y,
\qquad
u_{[a,b]}(\tilde{y}) \coloneqq \frac{1}{b-a}\mathbf{1}_{[a,b]}(\tilde{y}).
\]
Then,
\[
\mathbb{V}_{\overline{\ell}_\text{pred}}(\tilde{Y})
\ge
(1-\eta)m\, \mathbb{V}_{{\ell}_\text{pred}^\prime}(\tilde{Y}) + \eta \frac{(b-a)^2}{12}.
\]
In particular, if $[a,b]$ captures most of the Gaussian mass of $\ell_\text{pred}$, so that $m\approx 1$, then
\[
\mathbb{V}_{\overline{\ell}_\text{pred}}(\tilde{Y}) \gtrsim
(1-\eta) \mathbb{V}_{{\ell}_\text{pred}^\prime}(\tilde{Y}) + \eta \frac{(b-a)^2}{12}.
\]
\end{proposition}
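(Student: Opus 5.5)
The plan is to combine Corollary~\ref{cor-1}, used with the specific choice $u=u_{[a,b]}$, with a pointwise restriction to $[a,b]$. We then minimize over $\zeta$ term by term. Since $u_{[a,b]}$ is a bounded probability density on $\mathbb{R}$, Corollary~\ref{cor-1} gives, for every $\tilde y$,
\[
\overline{\ell}_\text{pred}(\tilde y)\ \ge\ (1-\eta)\ell_\text{pred}(\tilde y)+\eta\,u_{[a,b]}(\tilde y).
\]
Because $\ell_\text{pred}\ge 0$, we can drop its mass outside $[a,b]$. On $[a,b]$ we write $\ell_\text{pred}=m\,\ell'_\text{pred}$, where $\ell'_\text{pred}=\ell_\text{pred}\mathbf{1}_{[a,b]}/m$ is exactly the truncated normal $\mathcal{N}_{[a,b]}(\hat f(\tilde{\x}),s^2(\tilde{\x}))$. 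This yields the pointwise bound $\overline{\ell}_\text{pred}\ge (1-\eta)m\,\ell'_\text{pred}+\eta\,u_{[a,b]}$ on all of $\mathbb{R}$.

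Next, fix $\zeta\in\mathbb{R}$ and integrate the nonnegative function $(\tilde y-\zeta)^2$ against both sides:
\[
\int(\tilde y-\zeta)^2\overline{\ell}_\text{pred}\,d\tilde y\ \ge\ (1-\eta)m\int(\tilde y-\zeta)^2\ell'_\text{pred}\,d\tilde y+\eta\int(\tilde y-\zeta)^2u_{[a,b]}\,d\tilde y .
\]
Each integral on the right is a second moment about $\zeta$ of a probability density. It is therefore bounded below by the variance of that density, since $\mathbb{E}[(Y-\zeta)^2]=\mathbb{V}(Y)+(\mathbb{E}Y-\zeta)^2$. The variance of $\ell'_\text{pred}$ is $\mathbb{V}_{\ell'_\text{pred}}(\tilde Y)$, and the variance of the uniform law on $[a,b]$ is $(b-a)^2/12$. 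The resulting lower bound no longer depends on $\zeta$, so taking the minimum over $\zeta$ on the left gives the claimed inequality for $\mathbb{V}_{\overline{\ell}_\text{pred}}(\tilde Y)$ as defined. The approximate statement follows by substituting $m\approx 1$.

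The main obstacle is interpretive rather than computational. We must make sure that $\mathbb{E}_{\overline{\ell}_\text{pred}}$ means integration against the (possibly non-normalized) envelope density, as in the definition of $\mathbb{V}_{\overline{\ell}_\text{pred}}$, and that Corollary~\ref{cor-1} holds pointwise for the particular bounded density $u_{[a,b]}$. If $\overline{\ell}_\text{pred}$ has infinite second moment, the inequality holds trivially. Both terms are handled by the same pointwise domination together with monotonicity of the integral, so no normalization of $\overline{\ell}_\text{pred}$ is needed.
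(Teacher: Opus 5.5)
Your proposal is correct and follows essentially the same route as the paper: apply Corollary~\ref{cor-1} with $u=u_{[a,b]}$, restrict the Gaussian term to $[a,b]$ and rewrite it there as $m\,\ell'_\text{pred}$, then bound the two $\zeta$-dependent second moments from below by the respective variances $\mathbb{V}_{\ell'_\text{pred}}(\tilde Y)$ and $(b-a)^2/12$. The only cosmetic difference is in that last step. The paper uses $\min_\zeta(F+G)\ge\min_\zeta F+\min_\zeta G$, whereas you use the identity $\mathbb{E}[(Y-\zeta)^2]=\mathbb{V}(Y)+(\mathbb{E}Y-\zeta)^2$ for each fixed $\zeta$, and the two amount to the same thing.
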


We sum up our finding on the variance in the following.

\begin{lemma}[Predictive Variance Bounds]\label{var-compl}
    If ${\ell}_\text{pred} \approx {\ell}_\text{pred}^\prime$, $m \approx 1$, and $\frac{(b-a)^2}{12} \geq \mathbb{V}_{{\ell}_\text{pred}}(\tilde{Y})$, then it holds that
\begin{align}\label{var-bds}
\begin{split}
    \mathbb{V}_{\underline{\ell}_\text{pred}}(\tilde{Y}) &\leq (1-\eta) \mathbb{V}_{{\ell}_\text{pred}}(\tilde{Y}) \leq  \mathbb{V}_{{\ell}_\text{pred}}(\tilde{Y})\\
    &\leq 
    (1-\eta) \mathbb{V}_{{\ell}_\text{pred}}(\tilde{Y}) +  \eta \frac{(b-a)^2}{12}\\
    &\approx
    (1-\eta) \mathbb{V}_{{\ell}_\text{pred}^\prime}(\tilde{Y}) +  \eta \frac{(b-a)^2}{12}
     \lesssim \mathbb{V}_{\overline{\ell}_\text{pred}}(\tilde{Y}).
\end{split}
\end{align}
\end{lemma}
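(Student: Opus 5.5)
The chain in \eqref{var-bds} splits into five links, and we verify them one at a time from left to right, drawing almost entirely on Propositions \ref{prop-var} and \ref{prop-var2}.

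\textbf{First link.} The inequality $\mathbb{V}_{\underline{\ell}_\text{pred}}(\tilde{Y}) \leq (1-\eta) \mathbb{V}_{{\ell}_\text{pred}}(\tilde{Y})$ is exactly Proposition \ref{prop-var}, so nothing further is needed.

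\textbf{Second link.} The inequality $(1-\eta)\mathbb{V}_{\ell_\text{pred}}(\tilde Y)\le \mathbb{V}_{\ell_\text{pred}}(\tilde Y)$ holds because $\eta\in[0,1]$ and a variance is nonnegative. Here $\mathbb{V}_{\ell_\text{pred}}(\tilde Y)=s^2(\tilde{\x})$, since the minimizing $\zeta$ for the Gaussian is its mean $\hat f(\tilde{\x})$.

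\textbf{Third link.} For $\mathbb{V}_{\ell_\text{pred}}\le (1-\eta)\mathbb{V}_{\ell_\text{pred}}+\eta\frac{(b-a)^2}{12}$, we rewrite the left side as $(1-\eta)\mathbb{V}_{\ell_\text{pred}}+\eta\mathbb{V}_{\ell_\text{pred}}$. The claim then reduces to $\eta\,\mathbb{V}_{\ell_\text{pred}}\le \eta\frac{(b-a)^2}{12}$, which is precisely the hypothesis $\frac{(b-a)^2}{12}\ge \mathbb{V}_{\ell_\text{pred}}(\tilde Y)$ multiplied by $\eta\ge 0$. This is the only place that hypothesis is used.

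\textbf{Fourth link.} The step $(1-\eta)\mathbb{V}_{\ell_\text{pred}}+\eta\frac{(b-a)^2}{12}\approx (1-\eta)\mathbb{V}_{\ell'_\text{pred}}+\eta\frac{(b-a)^2}{12}$ follows from the assumption $\ell_\text{pred}\approx\ell'_\text{pred}$. I would make this precise by observing that the truncated Gaussian's variance converges to $s^2(\tilde{\x})$ as $m\to1$ (for instance, $[a,b]$ symmetric around $\hat f(\tilde{\x})$ with $b-a\to\infty$). The discrepancy is therefore controlled by the Gaussian tail mass $1-m$, and the approximation is exact in the limit.

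\textbf{Fifth link.} The final step $(1-\eta)\mathbb{V}_{\ell'_\text{pred}}+\eta\frac{(b-a)^2}{12}\lesssim \mathbb{V}_{\overline{\ell}_\text{pred}}(\tilde Y)$ is the second display of Proposition \ref{prop-var2}. It is obtained from the exact inequality $\mathbb{V}_{\overline{\ell}_\text{pred}}\ge (1-\eta)m\,\mathbb{V}_{\ell'_\text{pred}}+\eta(b-a)^2/12$ by setting $m\approx1$.

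\textbf{Main obstacle.} No link requires new analysis. The only delicate point is bookkeeping of the approximations: the $\approx$ and $\lesssim$ links hold only up to errors of order $(1-m)$ times $\mathbb{V}_{\ell'_\text{pred}}$, together with the difference between $\mathbb{V}_{\ell_\text{pred}}$ and $\mathbb{V}_{\ell'_\text{pred}}$. Both errors vanish as $m\to1$. The chain should therefore be read as exact for its first three inequalities and asymptotic in $m$ for the last two.

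Some compatibility between the hypotheses also has to be checked. Because $(b-a)^2/12$ is the variance of $u_{[a,b]}$, requiring it to dominate $s^2$ means $b-a\ge 2\sqrt3\,s(\tilde{\x})$. This is automatically compatible with $m\approx1$, since capturing most of the Gaussian mass already forces an interval of width several $s(\tilde{\x})$, so the hypotheses are jointly satisfiable.
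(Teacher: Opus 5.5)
Your proposal is correct and follows the paper's proof link by link. Each step has the same justification as in the paper: Proposition~\ref{prop-var} for the first inequality, $\eta\in[0,1]$ for the second, the hypothesis $\frac{(b-a)^2}{12}\ge\mathbb{V}_{\ell_\text{pred}}(\tilde Y)$ for the third, the assumption $\ell_\text{pred}\approx\ell'_\text{pred}$ for the $\approx$ step, and Proposition~\ref{prop-var2} with $m\approx 1$ for the final $\lesssim$.

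Your error bookkeeping is a sensible addition that the paper omits, with one small refinement. For a symmetric truncation at $\hat f(\tilde{\x})\pm k\,s(\tilde{\x})$, the gap $\mathbb{V}_{\ell_\text{pred}}-\mathbb{V}_{\ell'_\text{pred}}$ is of order $k^2(1-m)\,s^2(\tilde{\x})$ rather than $(1-m)\,s^2(\tilde{\x})$. It still vanishes as $m\to1$.
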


We note in passing that the conditions in Lemma \ref{var-compl} are always satisfied, because it is the user that selects $a$ and $b$, and so they can pick them to meet the requirements $m \approx 1$ and $\frac{(b-a)^2}{12} \geq \mathbb{V}_{{\ell}_\text{pred}}(\tilde{Y})$. For example, this holds if we use the classic heuristic of $a=\hat{f}(\tilde{\mathbf{x}}) - 3 s(\tilde{\mathbf{x}})$ and $b=\hat{f}(\tilde{\mathbf{x}}) + 3 s(\tilde{\mathbf{x}})$.

In addition, we use the truncation in Proposition \ref{prop-var2} because, without it, the function $\tilde{y} \mapsto \varphi(\tilde{y})\coloneqq(\tilde{y}-\zeta)^2$ need not be (Choquet) integrable with respect to the upper probability associated with the bound for the upper predictive density found in Corollary \ref{cor-1} \citep[Appendix C]{decooman}.

\begin{remark}[On the Choice of the Truncation Interval]
\label{rem:truncation-choice}
The bound on the upper predictive variance in Proposition~\ref{prop-var2}
depends on the truncation interval \([a,b]\), through the terms \(m\) and
\((b-a)^2/12\). If the interval is too narrow, the truncated predictive
density \(\ell^\prime_{\mathrm{pred}}\) deviates substantially from the
original Gaussian predictive density \(\ell_{\mathrm{pred}}\). If the
interval is too wide, the term \((b-a)^2/12\) becomes unnecessarily large,
leading to overly conservative uncertainty estimates.

On the one hand, to ensure that the truncated predictive density
\(\ell^\prime_{\mathrm{pred}}\) remains close to the Gaussian predictive
density
\[
\ell_{\mathrm{pred}}
=
\mathcal{N}\!\left(\hat f(\tilde{\mathbf{x}}),s^2(\tilde{\mathbf{x}})\right),
\]
the interval \([a,b]\) should contain most of the Gaussian mass. For the
symmetric choice
\[
a = \hat f(\tilde{\mathbf{x}}) - k s(\tilde{\mathbf{x}}),
\qquad
b = \hat f(\tilde{\mathbf{x}}) + k s(\tilde{\mathbf{x}}),
\]
the retained mass is
\(
m_k
=
\int_a^b \ell_{\mathrm{pred}}(\tilde y)\,d\tilde y
=
2\Phi(k)-1,
\)
where \(\Phi\) denotes the standard normal distribution function. Hence the
truncation error is explicitly
\(
1-m_k = 2\{1-\Phi(k)\}.
\)
For example, \(k=2\) gives \(m_k \approx 0.9545\), while \(k=3\) gives
\(m_k \approx 0.9973\). On the other hand, increasing \(k\) enlarges
\[
\frac{(b-a)^2}{12}
=
\frac{k^2}{3}s^2(\tilde{\mathbf{x}}),
\]
thereby enlarging the uniform-contamination contribution to the upper
variance envelope.

In the proportional asymptotic regime, a natural scaling choice is
\(k=O(1)\), independent of dimension, ensuring that the truncation width
remains proportional to the predictive standard deviation
\(s(\tilde{\mathbf{x}})\). Determining an optimal or data-adaptive choice
of \([a,b]\)---for instance, via calibration criteria or
decision-theoretic objectives---is an interesting direction for future work.
\end{remark}

\section{Variance and the Double Descent Phenomenon}\label{doub-desc2}
In this section, we work in the proportional growth random features regime, where
\[
d,n,N \to \infty
\quad\text{with}\quad
\psi_1 \coloneqq \nicefrac{N}{d}, \quad    \psi_2 \coloneqq \nicefrac{n}{d}
\]
fixed, and consider ridge-regularized random features regression with penalty $\lambda>0$.

\paragraph{Proof roadmap and interpretation.}
The double-descent preservation result below should be understood as the final step of the robustness analysis developed in Sections~\ref{sec:main} and \ref{sec:variance}. The technically substantive part is the derivation of the lower and upper posterior predictive density bounds and their translation into the variance envelope of Lemma~\ref{var-compl}. Once this envelope has been obtained, the preservation of the interpolation peak follows from the fact that the contamination bounds act as positive rescalings, or under the truncation choice below proportional transformations, of the baseline predictive variance. Thus, the result does not claim that affine invariance is technically difficult; rather, it shows that the robust contamination model reduces to a form that preserves the known double-descent phase structure of random-features regression.

Let $\ell_{\mathrm{pred}}=\mathcal N(\hat f(\tilde x), s^2(\tilde x))$ denote the classical Bayesian RF posterior predictive distribution induced by a Gaussian prior and Gaussian likelihood, and define the baseline predictive variance $V_{\mathrm{base}}(\psi_1;\lambda)
\coloneqq \mathbb{V}_{\ell_{\mathrm{pred}}}(\tilde Y).$

To highlight the dependence of the bounds for the variance on $\eta$, let 
\[
\underline V(\psi_1;\lambda,\eta)
\coloneqq \mathbb{V}_{\underline{\ell}_{\mathrm{pred}}}(\tilde{Y}),
\qquad
\overline V(\psi_1;\lambda,\eta)
\coloneqq \mathbb{V}_{\overline{\ell}_{\mathrm{pred}}}(\tilde{Y}).
\]

We now study formally the double descent phenomenon in the contamination credal set case.

\begin{corollary}[Persistence of Double Descent Under Contamination]
\label{cor:robust-dd-variance}

Fix $\psi_2 = n/d$ and $\lambda \ge 0$. Assume throughout that $\eta\in[0,1)$, so that $1-\eta>0$. Suppose that the baseline predictive variance $V_{\mathrm{base}}(\psi_1;\lambda)$ exhibits double descent as a function of $\psi_1$, with a nondegenerate local maximizer at $\psi_1^\star$.
Then the bounding functions appearing in Lemma~\ref{var-compl} preserve the location of this peak:

\begin{enumerate}
\item The function
\[
g_-(\psi_1)
\coloneqq
(1-\eta) V_{\mathrm{base}}(\psi_1;\lambda)
\]
has the same maximizer(s) as $V_{\mathrm{base}}(\psi_1;\lambda)$.

\item Under the truncation approximation with
\[
a(\psi_1)=\hat{f}(\tilde{\mathbf{x}})+\alpha_0\,s(\tilde{\mathbf{x}}),
\quad
b(\psi_1)=\hat{f}(\tilde{\mathbf{x}})+\beta_0\,s(\tilde{\mathbf{x}}),
\]
for fixed constants $\alpha_0<\beta_0$ independent of $\psi_1$, the function
\[
g_+(\psi_1):=(1-\eta)V'_{\mathrm{base}}(\psi_1;\lambda)+\eta\frac{(b-a)^2}{12},
\]
where $V'_{\mathrm{base}}(\psi_1;\lambda)$ is the predictive variance under
$\ell^\prime_{\mathrm{pred}}$, has the same maximizer(s) as $V_{\mathrm{base}}(\psi_1;\lambda)$.
\end{enumerate}

In particular, the bounding functions form an envelope around the double-descent peak $\psi_1^\star$.
\end{corollary}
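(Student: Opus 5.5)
Part 1 rests on the fact that multiplying by the positive constant $1-\eta$ does not change where a function is largest. For any $\psi_1,\psi_1'$ we have $g_-(\psi_1)\ge g_-(\psi_1')$ exactly when $V_{\mathrm{base}}(\psi_1;\lambda)\ge V_{\mathrm{base}}(\psi_1';\lambda)$, because $1-\eta>0$ is assumed. So the two functions have the same global maximizers. The same holds on every neighbourhood of $\psi_1^\star$, so they also share local maximizers. Nondegeneracy carries over as well: if $V_{\mathrm{base}}$ is twice differentiable, then $g_-''(\psi_1^\star)=(1-\eta)V_{\mathrm{base}}''(\psi_1^\star)<0$.

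For part 2, I will show that $g_+$ is also a positive constant times $V_{\mathrm{base}}$, and then repeat the argument of part 1. I standardize the truncated Gaussian. Put $Z=(\tilde Y-\hat f(\tilde{\mathbf x}))/s(\tilde{\mathbf x})$, so that under $\ell_{\mathrm{pred}}$ we have $Z\sim\mathcal N(0,1)$. With $a=\hat f+\alpha_0 s$ and $b=\hat f+\beta_0 s$, the truncated law $\ell'_{\mathrm{pred}}=\mathcal N_{[a,b]}(\hat f,s^2)$ is the image of $\mathcal N_{[\alpha_0,\beta_0]}(0,1)$ under the affine map $z\mapsto \hat f+sz$. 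Variance scales by the square of the slope, so
\[
V'_{\mathrm{base}}(\psi_1;\lambda)=s^2(\tilde{\mathbf x})\,c(\alpha_0,\beta_0),
\]
where $c(\alpha_0,\beta_0)>0$ is the variance of a standard normal truncated to $[\alpha_0,\beta_0]$. This constant does not depend on $\psi_1$, and it is positive because $\alpha_0<\beta_0$. In the same way, $(b-a)^2/12=(\beta_0-\alpha_0)^2 s^2(\tilde{\mathbf x})/12$. Since $s^2(\tilde{\mathbf x})=V_{\mathrm{base}}(\psi_1;\lambda)$, this gives
\[
g_+(\psi_1)=\Bigl[(1-\eta)c(\alpha_0,\beta_0)+\eta\,\tfrac{(\beta_0-\alpha_0)^2}{12}\Bigr]V_{\mathrm{base}}(\psi_1;\lambda)\eqqcolon K\,V_{\mathrm{base}}(\psi_1;\lambda).
\]
Here $K>0$ because $1-\eta>0$ and $c>0$. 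The order-preservation argument of part 1 then applies with $1-\eta$ replaced by $K$.

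The final envelope statement follows from combining these identities with Lemma~\ref{var-compl}. That lemma places $g_-\le V_{\mathrm{base}}\le g_+$ (the last inequality in the regime $(b-a)^2/12\ge V_{\mathrm{base}}$), and it gives $\underline V\le g_-$ and $g_+\lesssim\overline V$. All three curves are positive multiples of $V_{\mathrm{base}}$, so each peaks at $\psi_1^\star$. Together they bracket the baseline curve around that peak.

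The main obstacle is making sure the quantities I treat as $\psi_1$-independent really are. Only $\hat f$ and $s$ depend on $\psi_1$, and the standardization argument needs $a$ and $b$ to move with exactly those two quantities; the corollary's choice of $a(\psi_1)$ and $b(\psi_1)$ guarantees this. A smaller point concerns $\lambda=0$: here $V_{\mathrm{base}}$ may diverge at the interpolation threshold. In that case I would read "maximizer" as the location of the blow-up, and note that multiplying by a positive constant preserves divergence at the same point.
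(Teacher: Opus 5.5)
Your proposal is correct and follows the paper's proof. Both arguments reduce $g_-$ and $g_+$ to positive constant multiples of $V_{\mathrm{base}}$ and then use order preservation under positive scaling (Lemma~\ref{lem:affine-invariance}); your standardization argument simply makes explicit why the paper's constant $c_{\alpha_0,\beta_0}$ does not depend on $\psi_1$.

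One small caveat concerns your envelope remark. Lemma~\ref{var-compl} only gives $V_{\mathrm{base}}\le(1-\eta)V_{\mathrm{base}}+\eta(b-a)^2/12\approx g_+$, not $V_{\mathrm{base}}\le g_+$. Truncation strictly shrinks Gaussian variance ($c_{\alpha_0,\beta_0}<1$), so the exact inequality $V_{\mathrm{base}}\le g_+$ can fail for small $\eta$, for instance at $\eta=0$.
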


\section{Numerical Experiments}\label{sec:experiments}


In this section, we investigate the mechanism underlying the double-descent phenomenon and its robustness to likelihood misspecification.

Specifically, we decompose the test error into bias and variance terms, and then study how likelihood contamination induces uncertainty envelopes around the variance.
The purpose of these experiments is deliberately targeted: rather than providing a broad empirical benchmark, we use simulations to illustrate the two theoretical mechanisms established above, namely variance-driven double descent and the preservation of the interpolation peak under contamination-induced uncertainty envelopes.
We emphasize that the proposed framework is not intended to improve point-prediction accuracy directly; its practical benefit is robust predictive uncertainty quantification, namely producing conservative predictive envelopes and adjusted uncertainty regions under misspecification.

\textbf{Bias-variance decomposition.} 
For each feature-to-dimension ratio $\psi_1 = N/d$, we estimate the decomposition
\begin{align*}
\mathbb{E}\big[(\hat f(x) - f^{*}(x))^2\big]=&\underbrace{\big(\mathbb{E}[\hat f(x)] - f^{*}(x)\big)^2}_{\text{Bias}^2}\\ &+
\underbrace{\mathbb{E}\big[(\hat f(x) - \mathbb{E}[\hat f(x)])^2\big]}_{\text{Variance}},
\end{align*}
where the expectation is taken over independent draws of the training data and random
features.
Figure~\ref{fig:bias-variance-contamination}.(left) shows that the double descent peak is
entirely driven by the variance term, while the bias varies smoothly with $\psi_1$.
This confirms that the interpolation instability in random-feature regression is a variance-dominated phenomenon, especially in the ridgeless case, consistent with existing random matrix analyses.

\textbf{Contamination-induced variance envelopes.}
We then visualize the effect of likelihood contamination on predictive uncertainty.
Lemma \ref{var-compl} shows that the lower and upper predictive variances under contamination level
$\eta$ satisfy affine bounds of the form
\[
\underline V(\psi_1;\lambda,\eta)
 \le 
(1-\eta) V_{\mathrm{base}}(\psi_1;\lambda)
 \le 
\overline V(\psi_1;\lambda,\eta),
\]
where $V_{\mathrm{base}}(\psi_1;\lambda)$ denotes the baseline predictive variance.
Corollary~\ref{cor:robust-dd-variance} further implies that such affine transformations preserve the location of
any double descent peak in $\psi_1$.

Figure~\ref{fig:bias-variance-contamination}.(right) illustrates contamination-induced variance envelopes
constructed via affine transformations of the baseline predictive variance. The lower envelope corresponds to the scaling $(1-\eta)V_{\mathrm{base}}$, while the upper envelope is constructed from the truncated variance bound of Proposition~\ref{prop-var2}, namely a term of the form $(1-\eta)m\,V'_{\mathrm{base}}+\eta(b-a)^2/12$, which under the truncation choice of Corollary~\ref{cor:robust-dd-variance} is itself proportional to the baseline variance.
These constructions reflect the affine structure of the theoretical
bounds in Lemma \ref{var-compl} and Corollary~\ref{cor:robust-dd-variance}, and demonstrate that contamination preserves
the location of the interpolation peak while amplifying predictive
dispersion near $N=n$.

\begin{figure*}[t]
    \centering
    \begin{minipage}{0.44\linewidth}
        \centering
        \includegraphics[width=\linewidth]{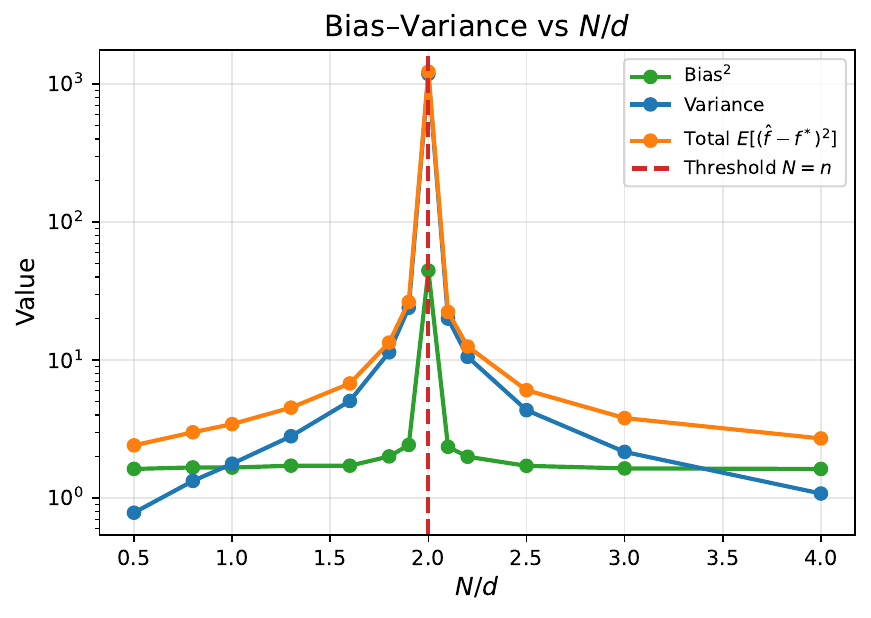}
        \caption*{(a) Bias-variance decomposition}
    \end{minipage}
    \hfill
    \begin{minipage}{0.44\linewidth}
        \centering
        \includegraphics[width=\linewidth]{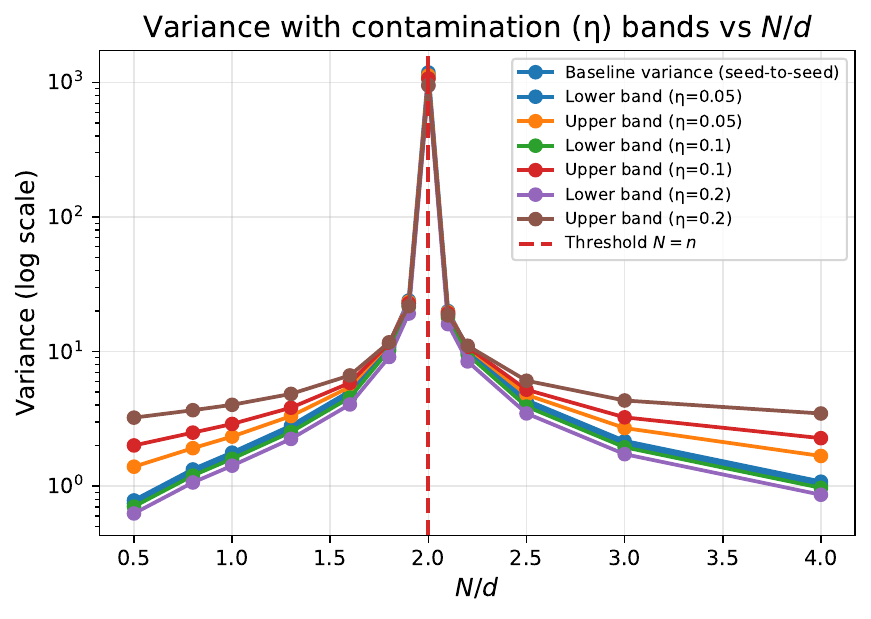}
        \caption*{(b) Variance with $\eta$-contamination bands}
    \end{minipage}
    \caption{Bias-variance mechanism and robustness to likelihood contamination.
    \emph{Left:} Bias$^2$, variance, and total error as functions of $\psi_1=N/d$,
    showing that the double descent peak is variance-driven.
    \emph{Right:} Contamination-induced variance envelopes for $\eta\in\{0.05,0.1,0.2\}$,
    illustrating the affine bounds of Lemma \ref{var-compl} and the persistence of the interpolation
    peak predicted by Corollary~\ref{cor:robust-dd-variance}.}
    \label{fig:bias-variance-contamination}
\end{figure*}


The likelihood contamination parameter $\eta$ introduced in Lemma~\ref{char_lemma} provides an abstract model of distributional misspecification, allowing a fraction of the
data-generating process to deviate arbitrarily from the assumed likelihood.
To connect this formal notion with concrete data corruption, we consider an explicit misspecification experiment based on label outliers.

\textbf{Experimental setup.}
We generate training data according to a random-feature teacher model,
\[
y = f^{*}(x) + \varepsilon,
\qquad \varepsilon \sim \mathcal N(0,\sigma^2),
\]
and introduce label misspecification via a Huber-type contamination model,
\begin{equation}
y =
\begin{cases}
f^*(x) + \varepsilon, & \text{with probability } 1-\eta, \\
f^*(x) + \varepsilon + A, & \text{with probability } \eta,
\end{cases}
\label{eq:huber-contamination}
\end{equation}
where $A \gg \sigma$ is a fixed outlier amplitude.
The parameter $\eta$ controls the fraction of corrupted labels.

For each contamination level $\eta$, we train a ridgeless random-feature regression model, i.e. we set $\lambda=0$, and evaluate the test mean squared error (MSE) as a function of the feature-to-dimension ratio $\psi_1 = N/d$, keeping the sample ratio $\psi_2 = n/d$ fixed.

\textbf{Results.}
Figure~\ref{fig:misspecification} reports the test MSE curves under increasing levels of label contamination.
In the absence of misspecification ($\eta=0$), the estimator exhibits the familiar double descent behavior, with a sharp error peak near the interpolation threshold $N=n$. As the contamination level $\eta$ increases, the height of the peak grows substantially, indicating strong noise amplification in the vicinity of interpolation.

Importantly, the location of the peak remains stable across contamination levels, while only its magnitude changes.
Although Lemma \ref{var-compl} and Corollary~\ref{cor:robust-dd-variance} concern predictive variance rather than test MSE directly, the present behavior is qualitatively consistent with those results, together with the bias-variance decomposition above, insofar as contamination primarily amplifies the variance-dominated interpolation peak without shifting its location in $\psi_1$.
The numerical results therefore support the interpretation of $\eta$ as a robustness parameter that controls the magnitude of uncertainty under misspecification, while preserving the structural double descent phenomenon. This experiment illustrates that the imprecision-based variance envelopes derived in Section~\ref{sec:variance} are not merely abstract worst-case bounds, but are qualitatively informative about the behavior of test error under concrete data corruption when the bias contribution remains comparatively stable.
\begin{figure}[t]
    \centering
    \includegraphics[width= .8\linewidth]{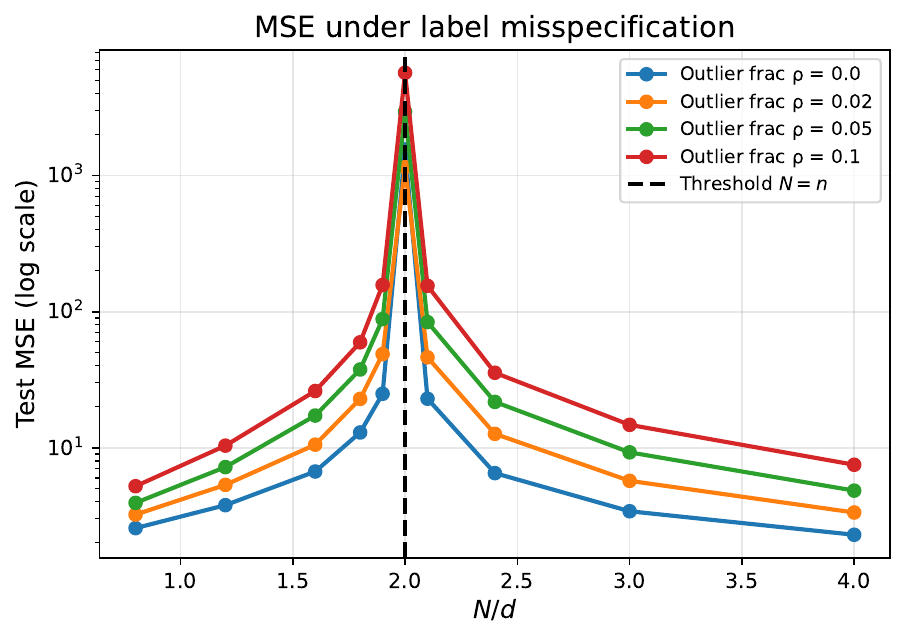}
    \caption{Test MSE as a function of $N/d$ under increasing levels of label
    misspecification $\eta$. Larger contamination amplifies the interpolation peak
    without shifting its location, consistent with Lemma \ref{var-compl} and Corollary~\ref{cor:robust-dd-variance}.}
    \label{fig:misspecification}
\end{figure}

\section{Conclusion}\label{concl}

We introduced a robust Bayesian formulation of random feature regression
based on Huber-style contamination sets for both prior and likelihood.
By replacing a single probabilistic specification with credal sets and
performing pessimistic generalized Bayesian updating, we obtained lower
and upper posterior predictive that explicitly quantify worst-case
predictive behavior under bounded model uncertainty.

Our main theoretical contribution is the derivation of simple and
tractable bounds on predictive densities and predictive variance.
These bounds act as affine transformations of the classical Bayesian
RF predictive distribution, yielding uncertainty envelopes that preserve
the leading-order proportional-growth asymptotics of random feature
models. In particular, the interpolation-driven double-descent structure
of predictive variance is retained, while its magnitude is adjusted to
reflect prior and likelihood contamination.

On the practical side, we introduced the Imprecise Highest Density
Region (IHDR) and showed that it admits an efficient approximation
via adjusted Gaussian credible intervals. This preserves the computational
tractability of Bayesian random features while providing robust guarantees.

Overall, our results establish a robustness theory for Bayesian random
features: predictive uncertainty remains analytically tractable and
asymptotically well-behaved, yet is improved by explicit worst-case
guarantees under bounded misspecification. Future work may investigate
data-adaptive choices of contamination levels, extensions to kernel
limits and deep feature models, and implications of
robust predictive uncertainty in high-dimensional regimes.

\appendix

\bibliography{tropical}

\appendix
\onecolumn
\section{Proofs of Our Results}\label{app-proofs}






\begin{proof}[Proof of Lemma \ref{low_up_dens}]
Let us begin with the lower density. Fix any $A\in \mathcal{B}(\mathbb{R}^N)\setminus\{\mathbb{R}^N\}$. By \citet[Example 3]{wasserman}, we know that $\underline{P}(A)=(1-\epsilon)P^\mathcal{N}(A)=(1-\epsilon)P^\mathcal{N}(A)+ \epsilon Q_{\star A}(A)$. In turn, we have that
\begin{align*}
    \underline{P}(A)&=(1-\epsilon)\int_A p^\mathcal{N}(\a) \text{d}\a + \epsilon \int_A q_{\star A}(\a) \text{d}\a\\
    &= \int_A \left[(1-\epsilon) p^\mathcal{N}(\a) + \epsilon q_{\star A}(\a) \right] \text{d}\a = \int_A \underline{p}(\a) \text{d}\a.
\end{align*}
For the upper density, fix any $A\in \mathcal{B}(\mathbb{R}^N)\setminus\{\emptyset\}$. By \citet[Example 3]{wasserman}, we know that $\overline{P}(A)=(1-\epsilon)P^\mathcal{N}(A) + \epsilon =(1-\epsilon)P^\mathcal{N}(A)+ \epsilon Q_{A}^\star(A)$. In turn, we have that
\begin{align*}
    \overline{P}(A)&=(1-\epsilon)\int_A p^\mathcal{N}(\a) \text{d}\a + \epsilon \int_A q_{A}^\star(\a) \text{d}\a\\
    &= \int_A \left[(1-\epsilon) p^\mathcal{N}(\a) + \epsilon q_{A}^\star(\a) \right] \text{d}\a = \int_A \overline{p}(\a)  \text{d}\a.
\end{align*}
\end{proof}

\begin{proof}[Proof of Theorem~\ref{main_th}]
    In what follows, we take $\underline{p}_{\a}$ to denote the $A$-independent lower envelope subdensity $(1-\epsilon)p^\mathcal{N}(\a)$. Throughout this proof, consistently with the absolutely continuous restriction discussed after Lemma~\ref{low_up_dens}, we restrict the likelihood contaminations to a subclass of bounded spike densities centered at the observed data $\mathbf y$, and we take this spike to be fixed, i.e. independent of the measurable set under consideration, of the level $t$ in the Choquet representation, and of $\a$. Hence, in the likelihood calculations below, we write
\(
\overline{\ell}_{\a}(\y)=(1-\eta)\ell^\mathcal{N}_{\a}(\y)+\eta \delta_{\mathbf y}(\y),
\)
where $\delta_{\mathbf y}$ is a bounded spike density centered at $\mathbf y$. Let $\underline{\ell}_{\a}(\y) \equiv \underline{p}(\y \mid \X,\bT,\a)$, and similarly for $\overline{\ell}_{\a}(\y)$. Under the absolutely continuous restriction discussed after Lemma~\ref{low_up_dens}, we work with the following density-level representation of the pessimistic generalized Bayes update \eqref{pessim-gen-bayes},
    \begin{align}\label{gen-bayes}
        \underline{p}(\a \mid \y, \X,\bT)=\frac{\underline{p}(\a) \underline{\ell}_{\a}(\y)}{\sup_{P\in\mathcal{P}_\epsilon} \int \overline{\ell}_{\a}(\y) P(d\a)}.
    \end{align}
    We also have that 
    $\overline{P}$ 
    is $2$-alternating, i.e. $\overline{P}(A \cup B) \leq \overline{P}(A)+\overline{P}(B)-\overline{P}(A \cap B)$, for all $A,B \in \mathcal{B}(\mathbb{R}^N)$ \citep{wasserman,novel_Bayes}. This means that the upper expectation $\sup_{P\in\mathcal{P}_\epsilon} \int \overline{\ell}_{\a}(\y) P(d\a)$ coincides with the Choquet integral $\int \overline{\ell}_{\a}(\y) \overline{P}(d\a)$ \citep{cerreia}, making computations easier. Let us focus on such an integration. We have that
    \begin{align}
        \int \overline{\ell}_{\a}(\y) \overline{P}(d\a) &= \int_0^\infty \overline{P} \left( \left\lbrace{\a : \overline{\ell}_{\a}(\y) \geq t}\right\rbrace \right) \text{d}t \label{first_deriv}\\
        &= \int_0^\lamed \overline{P} \left( \left\lbrace{\a : \overline{\ell}_{\a}(\y) \geq t}\right\rbrace \right) \text{d}t \label{second_deriv}\\
        &= \int_0^\lamed \left[ (1-\epsilon) P^\mathcal{N}\left( \left\lbrace{\a : \overline{\ell}_{\a}(\y) \geq t}\right\rbrace \right) + \epsilon Q^\star_{\left\lbrace{\a : \overline{\ell}_{\a}(\y) \geq t}\right\rbrace} \left( \left\lbrace{\a : \overline{\ell}_{\a}(\y) \geq t}\right\rbrace \right) \right] \text{d}t \label{third_deriv}\\
        &=(1-\epsilon)\int_0^\lamed P^\mathcal{N}\left( \left\lbrace{\a : \overline{\ell}_{\a}(\y) \geq t}\right\rbrace \right) \text{d}t  + \epsilon \int_0^\lamed \underbrace{Q^\star_{\left\lbrace{\a : \overline{\ell}_{\a}(\y) \geq t}\right\rbrace} \left( \left\lbrace{\a : \overline{\ell}_{\a}(\y) \geq t}\right\rbrace \right)}_{=1 \text{ for all } t} \text{d}t \label{fourth_deriv}\\
        &=(1-\epsilon) \int \overline{\ell}_{\a}(\y) p^\mathcal{N}(\a) \text{d}\a + \underbrace{\epsilon \int_0^\lamed \text{d}t}_{= \epsilon\cdot \lamed \eqqcolon c}\nonumber\\
        &=(1-\epsilon) \int \left[ (1-\eta) \ell^\mathcal{N}_{\a}(\y) + \eta \delta_{\y} \right] p^\mathcal{N}(\a) \text{d}\a +c \label{fifth_deriv}\\
        &=(1-\epsilon)(1-\eta) \int \ell^\mathcal{N}_{\a}(\y) p^\mathcal{N}(\a) \text{d}\a + (1-\epsilon)\eta \delta_{\y} \int p^\mathcal{N}(\a) \text{d}\a +c \nonumber\\
        &=(1-\epsilon)(1-\eta)p(\y \mid \X,\bT) + (1-\epsilon)\eta \delta_{\y} + c, \label{sixth_deriv}
    \end{align}
where \eqref{first_deriv} comes from \citet[Proposition C.3]{decooman} and \citet[Equation (11)]{marinacci2}; \eqref{second_deriv} comes from $\overline{\ell}_{\a}(\y)$ being bounded and having set $\lamed \coloneqq \sup_{\a} \overline{\ell}_{\a}(\y)$; \eqref{third_deriv} comes from Lemma \ref{low_up_dens}; \eqref{fourth_deriv} comes from the additivity of the integral operator and Lemma \ref{low_up_dens}; in \eqref{fifth_deriv}, the spike density term $\delta_{\y}$ is at the observed data $\mathbf{y}$; \eqref{sixth_deriv} comes from $p(\y \mid \X,\bT)=\int \ell^\mathcal{N}_{\a}(\y) p^\mathcal{N}(\a) \text{d}\a$ being the marginal likelihood in the classical Bayesian RF model, and $\delta_{\y}\int p^\mathcal{N}(\a) d\a=\delta_{\y}$ because $\delta_{\y}$ does not depend on $\a$, and $\int p^\mathcal{N}(\a) d\a=1$
because $p^\mathcal{N}(\a)$ is a proper density. 

In turn, by Lemma \ref{low_up_dens} and by substituting \eqref{sixth_deriv} in \eqref{gen-bayes}, we obtain

$$\underline{p}(\a \mid \y, \X,\bT)=\frac{(1-\epsilon)p^\mathcal{N}(\a) (1-\eta) \ell^\mathcal{N}_{\a}(\y)}{(1-\epsilon)(1-\eta)\int \ell^\mathcal{N}_{\a}(\y) p^\mathcal{N}(\a) \text{d}\a + (1-\epsilon)\eta \delta_{\y} + c}.$$

For proper measurable sets $A\in\mathcal{B}(\mathbb{R}^N)\setminus\{\mathbb{R}^N\}$, this expression is understood as the subdensity representation associated with the updated lower probability, while for $A=\mathbb{R}^N$ one must appeal to the proper setwise characterization recalled in Lemma~\ref{char_lemma}. In particular, in the derivation of the predictive bound below, we use this subdensity representation as an auxiliary density-level device for bounding the lower posterior predictive envelope, rather than as a globally normalized conditional density on all of $\mathbb{R}^N$. It can be routinely checked that, on proper measurable sets and together with the full-space convention from Lemma~\ref{char_lemma}, the corresponding updated set function defines a well-behaved lower probability. That is, for all $B \in \mathcal{B}(\mathbb{R}^n)$, it satisfies 

\begin{itemize}
    \item[(i)] $\underline{P}(\a \in \emptyset \mid \X,\bT,\y\in B)=0$,
    \item[(ii)] $\underline{P}(\a \in \mathbb{R}^N \mid \X,\bT,\y\in B)=1$,
    \item[(iii)] $A_1 \subseteq A_2 \implies \underline{P}(\a \in A_1 \mid \X,\bT,\y\in B) \leq \underline{P}(\a \in A_2 \mid \X,\bT,\y\in B)$,
    \item[(iv)] $\underline{P}(\a \in A_1 \sqcup A_2 \mid \X,\bT,\y\in B) \geq \underline{P}(\a \in A_1 \mid \X,\bT,\y\in B) + \underline{P}(\a \in A_2 \mid \X,\bT,\y\in B)$. 
\end{itemize}


Suppose now we are given a new input $\tilde{\x}$. We then focus on the lower posterior predictive density $\underline{\ell}_\text{pred}(\tilde{y})\equiv \underline{p}(\tilde{y} \mid \tilde{\x},\y,\X,\bT)$. Let us denote the likelihood $\underline{\ell}_{\a}(\tilde{y}) \equiv \underline{p}(\tilde{y} \mid \tilde{\x}, \bT,\a)$ for ease of notation. Using a reasoning similar to that above, we write 

\begin{align}
   \underline{\ell}_\text{pred}(\tilde{y}) &= \inf_{P \in \mathcal{M}(\underline{P}(\cdot \mid \X,\bT,\y\in B))} \int \underline{\ell}_{\a}(\tilde{y}) P(\text{d}\a \mid \X,\bT,\y\in B) \nonumber\\
   &= \int \underline{\ell}_{\a}(\tilde{y}) \underline{P}(\text{d}\a \mid \X,\bT,\y\in B) \label{first-part2}\\
   &= \int \underline{\ell}_{\a}(\tilde{y}) \underline{p}(\a \mid \y, \X,\bT) \text{d}\a \label{second-part2}\\
   &=\int (1-\eta) \ell^\mathcal{N}_{\a}(\tilde{y})
\left[ \frac{(1-\epsilon)p^\mathcal{N}(\a) (1-\eta) \ell^\mathcal{N}_{\a}(\y)}{(1-\epsilon)(1-\eta)\int \ell^\mathcal{N}_{\a}(\y) p^\mathcal{N}(\a) \text{d}\a + (1-\epsilon)\eta \delta_{\y} + c} \right] \text{d}\a \nonumber\\
   &= \int \frac{(1-\epsilon)(1-\eta)^2 p^\mathcal{N}(\a) \ell^\mathcal{N}_{\a}(\y) \ell^\mathcal{N}_{\a}(\tilde{y})}{(1-\epsilon)(1-\eta)\int \ell^\mathcal{N}_{\a}(\y) p^\mathcal{N}(\a) \text{d}\a +  (1-\epsilon)\eta \delta_{\y} + c} \text{d}\a \nonumber\\
   &= \frac{(1-\epsilon)(1-\eta)^2}{(1-\epsilon)(1-\eta)\int \ell^\mathcal{N}_{\a}(\y) p^\mathcal{N}(\a) \text{d}\a +  (1-\epsilon)\eta \delta_{\y} + c} \int p^\mathcal{N}(\a) \ell^\mathcal{N}_{\a}(\y) \ell^\mathcal{N}_{\a}(\tilde{y}) \text{d}\a \nonumber\\
   &= \frac{(1-\epsilon)(1-\eta)^2 \int \ell^\mathcal{N}_{\a}(\y) p^\mathcal{N}(\a) \text{d}\a}{(1-\epsilon)(1-\eta)\int \ell^\mathcal{N}_{\a}(\y) p^\mathcal{N}(\a) \text{d}\a +  (1-\epsilon)\eta \delta_{\y} + c} \int \ell^\mathcal{N}_{\a}(\tilde{y}) \underbrace{\frac{p^\mathcal{N}(\a) \ell^\mathcal{N}_{\a}(\y)}{\int p^\mathcal{N}(\a) \ell^\mathcal{N}_{\a}(\y) \text{d}\a}}_{\text{this ratio is }=p^\mathcal{N}(\a \mid \y, \X,\bT)}  \text{d}\a \label{third-part2}\\
   &= \frac{(1-\epsilon)(1-\eta)^2 \int \ell^\mathcal{N}_{\a}(\y) p^\mathcal{N}(\a) \text{d}\a}{(1-\epsilon)(1-\eta)\int \ell^\mathcal{N}_{\a}(\y) p^\mathcal{N}(\a) \text{d}\a +  (1-\epsilon)\eta \delta_{\y} + c} \underbrace{p(\tilde{y} \mid \tilde{\x},\y,\X,\Theta)}_{\equiv \ell_\text{pred}(\tilde{y})} \label{fourth-part2}\\
   &\leq (1-\eta) \ell_\text{pred}(\tilde{y}), \label{fifth-part2}
\end{align}
where \eqref{first-part2} comes from the posterior lower probability $\underline{P}(\cdot \mid \X,\bT,\y\in B)$ being 2-monotone \citep{wasserman,novel_Bayes}; 
\eqref{second-part2} comes from writing the Choquet integral in terms of the superlevel sets of $\underline{\ell}_{\a}(\tilde y)$, namely the sets
\(
\{\a\in\mathbb{R}^N:\underline{\ell}_{\a}(\tilde y)\ge t\},
\)
$t >0$, and observing that these are proper measurable subsets of $\mathbb{R}^N$; in \eqref{third-part2}, we denoted by $p^\mathcal{N}(\a \mid \y, \X,\bT)$ the (classical) posterior derived from \eqref{normal_pr} and \eqref{normal_lik}; in \eqref{fourth-part2}, we denoted by $p(\tilde{y} \mid \tilde{\x},\y,\X,\Theta)$ the (classical) posterior predictive distribution, derived from \eqref{normal_pr} and \eqref{normal_lik}, and the new input $\tilde{\x}$. 


Equation \eqref{fifth-part2} gives an upper bound for $\underline{\ell}_\text{pred}(\tilde{y})$.
Moreover, if $(1-\epsilon)\eta \delta_{\y}+c$ is negligible compared to
$(1-\epsilon)(1-\eta)\int \ell^\mathcal{N}_{\a}(\y)p^\mathcal{N}(\a) d\a$,
then $\underline{\ell}_\text{pred}(\tilde{y}) \approx (1-\eta)\ell_\text{pred}(\tilde{y})$.
\end{proof}

\begin{proof}[Proof of Corollary \ref{cor-1}]
   Similarly to the proof of Theorem~\ref{main_th}, under the absolutely continuous bounded-spike subclass discussed after Lemma~\ref{low_up_dens}, and restricting attention to contaminating prior densities for which a global upper density representation exists, we work with the following density-level representation of the optimistic generalized Bayes update, 
    \begin{align}\label{gen-bayes2}
        \overline{p}(\a \mid \y, \X,\bT)=\frac{\overline{p}(\a) \overline{\ell}_{\a}(\y)}{\inf_{P\in\mathcal{P}_\epsilon} \int \underline{\ell}_{\a}(\y) P(d\a)},
    \end{align} 
where $\overline{p}(\a)$ denotes the $A$-independent upper density, when it exists.
    
Fix any probability density $u_n$ on $\mathbb{R}^n$ and the density $u$ on $\mathbb{R}$ appearing in the statement of Corollary~\ref{cor-1}.
Throughout this proof, interpret the likelihood contamination term in $\overline{\ell}_{\a}(\y)$ as $\eta u_n(\y)$, and the one in $\overline{\ell}_{\a}(\tilde y)$ as $\eta u(\tilde y)$.

    Let us focus on the integral at the denominator. We know that $\underline{P}\coloneqq \inf_{P\in\mathcal{P}_\epsilon}P$ is $2$-monotone, i.e. $\underline{P}(A \cup B) \geq \underline{P}(A)+\underline{P}(B)-\underline{P}(A \cap B)$, for all $A,B \in \mathcal{B}(\mathbb{R}^N)$ -- see e.g. \citet{wasserman,novel_Bayes}. This means that the lower expectation $\inf_{P\in\mathcal{P}_\epsilon} \int \underline{\ell}_{\a}(\y) P(d\a)$ coincides with the Choquet integral $\int \underline{\ell}_{\a}(\y) \underline{P}(d\a)$ \citep{cerreia}. In turn, we have that
\begin{align*}
        \int \underline{\ell}_{\a}(\y) \underline{P}(d\a) &= \int_0^\infty \underline{P} \left( \left\lbrace{\a : \underline{\ell}_{\a}(\y) \geq t}\right\rbrace \right) \text{d}t\\
        &= \int_0^{\lamed^\prime} \left[ (1-\epsilon) P^\mathcal{N}\left( \left\lbrace{\a : \underline{\ell}_{\a}(\y) \geq t}\right\rbrace \right) + \epsilon Q_{\star\left\lbrace{\a : \underline{\ell}_{\a}(\y) \geq t}\right\rbrace} \left( \left\lbrace{\a : \underline{\ell}_{\a}(\y) \geq t}\right\rbrace \right) \right] \text{d}t \\
        &=(1-\epsilon)\int_0^{\lamed^\prime} P^\mathcal{N}\left( \left\lbrace{\a : \underline{\ell}_{\a}(\y) \geq t}\right\rbrace \right) \text{d}t  + \epsilon \int_0^{\lamed^\prime} \underbrace{Q_{\star\left\lbrace{\a : \underline{\ell}_{\a}(\y) \geq t}\right\rbrace} \left( \left\lbrace{\a : \underline{\ell}_{\a}(\y) \geq t}\right\rbrace \right)}_{=0 \text{ for all } t} \text{d}t\\
        &=(1-\epsilon) \int \underline{\ell}_{\a}(\y) p^\mathcal{N}(\a) \text{d}\a \\
        &=(1-\epsilon) \int (1-\eta) \ell^\mathcal{N}_{\a}(\y)  p^\mathcal{N}(\a)  \text{d}\a\\
        &=(1-\eta) (1-\epsilon) \underbrace{\int \ell^\mathcal{N}_{\a}(\y) p^\mathcal{N}(\a) \text{d}\a}_{=p(\y \mid \X,\bT)},
    \end{align*}
where $\lamed^\prime \coloneqq \sup_{\a} \underline{\ell}_{\a}(\y)$. So, we can rewrite \eqref{gen-bayes2} as
\begin{align*}
    \overline{p}(\a \mid \y, \X,\bT)&=\frac{\overline{p}(\a) \overline{\ell}_{\a}(\y)}{(1-\eta) (1-\epsilon) \int \ell^\mathcal{N}_{\a}(\y) p^\mathcal{N}(\a) \text{d}\a}\\
    &\ge \frac{(1-\epsilon) p^\mathcal{N}(\a)  \left[ (1-\eta) \ell^\mathcal{N}_{\a}(\y) +\eta u_n(\y) \right]}{(1-\eta) (1-\epsilon) \int \ell^\mathcal{N}_{\a}(\y) p^\mathcal{N}(\a) \text{d}\a}\\
    &\geq \frac{ (1-\eta)(1-\epsilon) p^\mathcal{N}(\a) \ell^\mathcal{N}_{\a}(\y) }{(1-\eta) (1-\epsilon) \int \ell^\mathcal{N}_{\a}(\y) p^\mathcal{N}(\a) \text{d}\a} = {p}(\a \mid \y, \X,\bT).
\end{align*}

Since optimistic generalized Bayes preserves concavity of upper probabilities, the updated upper probability is $2$-alternating \citep{wasserman,novel_Bayes}. Hence, interpreting the upper posterior predictive prevision as the natural extension of the updated upper probability, it coincides with the Choquet integral with respect to that updated upper probability. For proper measurable sets $A\in\mathcal{B}(\mathbb{R}^N)\setminus\{\emptyset\}$, the expression above is understood as the superdensity representation associated with the updated upper probability, while for $A=\emptyset$ one appeals to the proper setwise characterization in Lemma~\ref{char_lemma}. In particular, since for every $t>0$ the level sets
\(
\{\a\in\mathbb{R}^N:\overline{\ell}_{\a}(\tilde y)\ge t\}
\)
are proper measurable subsets of $\mathbb{R}^N$, this superdensity representation is sufficient for the Choquet calculation below; hence, with a slight abuse of notation, we may write the corresponding Choquet integral in superdensity form. We can then turn our attention to the upper posterior predictive density. We have that

\begin{align*}
    \overline{\ell}_\text{pred}(\tilde{y}) &= \int \overline{\ell}_{\a}(\tilde{y}) \overline{p}(\a \mid \y, \X,\bT) \text{d}\a \\
    &\geq \int \overline{\ell}_{\a}(\tilde{y}) {p}(\a \mid \y, \X,\bT) \text{d}\a\\
    &= \int \left[ (1-\eta) {\ell}^\mathcal{N}_{\a}(\tilde{y}) + \eta u(\tilde{y}) \right] {p}(\a \mid \y, \X,\bT) \text{d}\a\\
&= (1-\eta) \int {\ell}^\mathcal{N}_{\a}(\tilde{y}) {p}(\a \mid \y, \X,\bT) \text{d}\a  +  \eta u(\tilde{y}) \underbrace{\int {p}(\a \mid \y, \X,\bT) \text{d}\a}_{=1}\\
&= (1-\eta) \underbrace{p^\mathcal{N}(\tilde{y} \mid \tilde{\x},\y,\X,\bT)}_{\equiv \ell_\text{pred}(\tilde{y})}  +  \eta u(\tilde{y}),
\end{align*}
where the first equality is the Choquet integral written in superdensity form, justified by the preceding paragraph, and the inequality follows from the pointwise bound $\overline{p}(\a \mid \y,\X,\bT)\ge p(\a \mid \y,\X,\bT)$ together with the nonnegativity of $\overline{\ell}_{\a}(\tilde y)$.
The last equality gives a lower bound for $\overline{\ell}_\text{pred}(\tilde y)$.
Moreover, for $\epsilon,\eta$ sufficiently small, the upper posterior can be well approximated by the classical posterior whenever the contaminating contributions are negligible. 
Indeed, for any choice of contaminating prior density $q$ and contaminating likelihood density $u_n$, write
\[
\overline p(\a)=(1-\epsilon)p^{\mathcal N}(\a)+\epsilon q(\a),
\qquad
\overline \ell_{\a}(\y)=(1-\eta)\ell^{\mathcal N}_{\a}(\y)+\eta u_n(\y).
\]
Then,
\[
\overline p(\a) \overline \ell_{\a}(\y)
=(1-\epsilon)(1-\eta)p^{\mathcal N}(\a)\ell^{\mathcal N}_{\a}(\y)
+\epsilon(1-\eta)q(\a)\ell^{\mathcal N}_{\a}(\y)
+(1-\epsilon)\eta p^{\mathcal N}(\a)u_n(\y)
+\epsilon\eta q(\a)u_n(\y).
\]
If the last three terms are small compared to the first one (e.g. in $L^1(d\a)$ over $\a$), then the resulting upper posterior is close to the classical posterior, and consequently
$\overline{\ell}_\text{pred}(\tilde y)\approx (1-\eta)\ell_\text{pred}(\tilde y)+\eta u(\tilde y)$.
\end{proof}

\begin{proof}[Proof of Proposition~\ref{char-ihdr}]
    Recall the definition of a $\beta$-level (precise) Highest Density Region  for the probability measure $P_\text{pred}$ associated with pdf $\ell_\text{pred}$, denoted by $R_\beta[P_\text{pred}]$ \citep{hyndman}. It is the element of $\mathcal{B}(\mathbb{R})$ that satisfies two properties, (i) ${P}_\text{pred}\left(\{\tilde{y}\in R_\beta[P_\text{pred}]\}\right)= \beta$, and (ii) $\int_{R_\beta[P_\text{pred}]} \mathrm{d}\tilde{y}$ is a minimum.
    Let
    \[
    S\coloneqq \text{IR}_\alpha[\mathcal{M}(\underline{P}_\text{pred})].
    \]
    If $\alpha=0$, then necessarily $S=\mathbb{R}$ and $\beta=1$, so $R_\beta[P_\text{pred}]=\mathbb{R}$ and the claim is immediate. Hence, in the remainder of the proof, assume $0<\alpha\le 1$, so that $S$ is a proper measurable set. By definition of IHDR,
    \[
    \underline{P}_\text{pred}\left(\{\tilde{y}\in S\}\right)=1-\alpha.
    \]
    By Theorem~\ref{main_th},
    \begin{align}\label{intermezzo}
\underline{P}_\text{pred}\left(\{\tilde{y}\in S\}\right)
    = \int_S \underline{\ell}_\text{pred}(\tilde{y})  \mathrm{d}\tilde{y}
    \leq (1-\eta) \int_S {\ell}_\text{pred}(\tilde{y})  \mathrm{d}\tilde{y}
    = (1-\eta) P_\text{pred}\left(\{\tilde{y}\in S\}\right).
    \end{align}
    Since $P_\text{pred}(S)\le 1$, \eqref{intermezzo} implies
\[
1-\alpha=\underline{P}_\text{pred}(S)\le (1-\eta)P_\text{pred}(S)\le 1-\eta,
\]
hence necessarily $\alpha\ge \eta$. Therefore, under the existence assumption for the IHDR and for $\eta<1$, we have
\[
\beta=\min\left\{1,\frac{1-\alpha}{1-\eta}\right\}=\frac{1-\alpha}{1-\eta}.
\]
    
    In turn,
    \[
    P_\text{pred}\left(\{\tilde{y}\in S\}\right)\geq \frac{1-\alpha}{1-\eta}\eqqcolon \beta.
    \]
    Thus $S$ is feasible for the optimization problem defining $R_\beta[P_\text{pred}]$. By minimality of $R_\beta[P_\text{pred}]$,
    \[
    \int_{R_\beta[P_\text{pred}]} \mathrm{d}\tilde{y}
    \le
    \int_S \mathrm{d}\tilde{y}.
    \]
    Since $\ell_\text{pred}$ is a Normal distribution, it is symmetric and unimodal, and hence $R_\beta[P_\text{pred}]$ coincides with the classical notion of credible interval \citep{hyndman,ibnn}.
\end{proof}

\begin{proof}[Proof of Proposition~\ref{prop-var}]
    We have that
    \begin{align}
        \mathbb{V}_{\underline{\ell}_\text{pred}}(\tilde{Y}) &\coloneqq \min_{\zeta\in\mathbb{R}} \mathbb{E}_{\underline{\ell}_\text{pred}}[(\tilde{Y}-\zeta)^2] \nonumber \\
        &= \min_{\zeta\in\mathbb{R}} \int_\mathbb{R} (\tilde{y}-\zeta)^2 \underline{\ell}_\text{pred}(\tilde{y})\text{d}\tilde{y} \nonumber \\
        &\leq (1-\eta) \min_{\zeta\in\mathbb{R}} \int_\mathbb{R} (\tilde{y}-\zeta)^2 {\ell}_\text{pred}(\tilde{y})\text{d}\tilde{y} \label{thm-3-conseq}\\
        &= (1-\eta) \mathbb{V}_{{\ell}_\text{pred}}(\tilde{Y}), \nonumber
    \end{align}
    where \eqref{thm-3-conseq} is an immediate consequence of Theorem \ref{main_th}, since for every fixed $\zeta\in\mathbb R$ the function $\tilde y\mapsto (\tilde y-\zeta)^2$ is nonnegative.
\end{proof}

\begin{proof}[Proof of Proposition~\ref{prop-var2}]
    We have that
    \begin{align}
        \mathbb{V}_{\overline{\ell}_\text{pred}}(\tilde{Y})
        &\coloneqq \min_{\zeta\in\mathbb{R}} \mathbb{E}_{\overline{\ell}_\text{pred}}[(\tilde{Y}-\zeta)^2] \nonumber \\
        &= \min_{\zeta\in\mathbb{R}} \int_\mathbb{R} (\tilde{y}-\zeta)^2 \overline{\ell}_\text{pred}(\tilde{y})\,\text{d}\tilde{y} \nonumber \\
        &\geq \min_{\zeta\in\mathbb{R}} \int_\mathbb{R} (\tilde{y}-\zeta)^2
        \left[(1-\eta) \ell_\text{pred}(\tilde{y}) + \eta u_{[a,b]}(\tilde{y}) \right]
        \text{d}\tilde{y} \label{cor-3.1-conseq-rev}\\
        &\geq \min_{\zeta\in\mathbb{R}} \left[
        (1-\eta)\int_a^b (\tilde{y}-\zeta)^2 \ell_\text{pred}(\tilde{y})\,\text{d}\tilde{y}
        +
        \eta \int_a^b (\tilde{y}-\zeta)^2 \frac{1}{b-a}\,\text{d}\tilde{y}
        \right], \label{restrict-int}
    \end{align}
    where \eqref{cor-3.1-conseq-rev} follows from Corollary~\ref{cor-1}, and \eqref{restrict-int} follows by restricting the first integral to $[a,b]$, since the integrand is nonnegative.

    By definition of the truncated density, for $\tilde y\in[a,b]$ we have
    \[
    \ell_\text{pred}(\tilde y)=m\,\ell_\text{pred}^\prime(\tilde y).
    \]
    Therefore,
    \begin{align}
        \mathbb{V}_{\overline{\ell}_\text{pred}}(\tilde{Y})
        &\geq \min_{\zeta\in\mathbb{R}} \left[
        (1-\eta)m\int_a^b (\tilde{y}-\zeta)^2 \ell_\text{pred}^\prime(\tilde{y})\,\text{d}\tilde{y}
        +
        \eta \int_a^b (\tilde{y}-\zeta)^2 \frac{1}{b-a}\,\text{d}\tilde{y}
        \right] \nonumber \\
        &\geq (1-\eta)m \min_{\zeta\in\mathbb{R}} \int_a^b (\tilde{y}-\zeta)^2 \ell_\text{pred}^\prime(\tilde{y})\,\text{d}\tilde{y}
        +
        \eta \min_{\zeta\in\mathbb{R}} \int_a^b (\tilde{y}-\zeta)^2 \frac{1}{b-a}\,\text{d}\tilde{y} \nonumber \\
        &= (1-\eta)m\, \mathbb{V}_{{\ell}_\text{pred}^\prime}(\tilde{Y}) + \eta \frac{(b-a)^2}{12}. \label{final-eq-result-rev}
    \end{align}
    The last equality holds because the value $\zeta=\frac{a+b}{2}$ that minimizes
    \[
    \int_a^b (\tilde{y}-\zeta)^2 \frac{1}{b-a}\,\text{d}\tilde{y}
    \]
    is the midpoint of the interval $[a,b]$, and the corresponding minimum is the variance of the uniform distribution on $[a,b]$, namely $(b-a)^2/12$.

    Finally, if $m\approx 1$, that is, if $[a,b]$ captures most of the Gaussian mass of $\ell_\text{pred}$, then \eqref{final-eq-result-rev} yields the approximation
    \[
    \mathbb{V}_{\overline{\ell}_\text{pred}}(\tilde{Y}) \gtrsim
    (1-\eta)\mathbb{V}_{{\ell}_\text{pred}^\prime}(\tilde{Y}) + \eta \frac{(b-a)^2}{12}.
    \]
\end{proof}

We now present a lemma that will play a pivotal role in the proof of Lemma \ref{var-compl}.

\begin{lemma}[Predictive Variance Bounds: Sanity Check]\label{sanity}
    It holds that $\mathbb{V}_{\underline{\ell}_\text{pred}}(\tilde{Y}) \leq \mathbb{V}_{{\ell}_\text{pred}}(\tilde{Y}) \leq \mathbb{V}_{\overline{\ell}_\text{pred}}(\tilde{Y})$.
\end{lemma}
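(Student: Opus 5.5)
The plan is to prove the two inequalities separately, both by a pointwise density comparison inside the minimisation over $\zeta$. In each case the integrand $(\tilde y-\zeta)^2$ is nonnegative, so an ordering of densities gives an ordering of the second moments for every fixed $\zeta$, and hence of their minima.

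For the left inequality, I will invoke Proposition~\ref{prop-var}, which gives $\mathbb{V}_{\underline{\ell}_\text{pred}}(\tilde{Y}) \leq (1-\eta)\mathbb{V}_{\ell_\text{pred}}(\tilde{Y})$. Since $\eta\in[0,1]$ and $\mathbb{V}_{\ell_\text{pred}}(\tilde Y)=s^2(\tilde{\x})\ge 0$, we have $(1-\eta)\mathbb{V}_{\ell_\text{pred}}\le \mathbb{V}_{\ell_\text{pred}}$, and the left inequality follows immediately. Equivalently, one can go back to Theorem~\ref{main_th}, use $\underline{\ell}_\text{pred}\le(1-\eta)\ell_\text{pred}\le \ell_\text{pred}$ pointwise, integrate against $(\tilde y-\zeta)^2$, and take the minimum over $\zeta$.

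For the right inequality, Proposition~\ref{prop-var2} is not suitable directly, because it only yields an approximate statement involving the truncated density. I will instead use the intermediate object from the proof of Corollary~\ref{cor-1}. That proof establishes the pointwise bound $\overline{p}(\a\mid\y,\X,\bT)\ge p(\a\mid\y,\X,\bT)$. Together with $\overline{\ell}_{\a}(\tilde y)\ge \ell^\mathcal{N}_{\a}(\tilde y)$, this gives
$$\overline{\ell}_\text{pred}(\tilde y)\ge \int \ell^\mathcal{N}_{\a}(\tilde y)\,p(\a\mid\y,\X,\bT)\,d\a=\ell_\text{pred}(\tilde y).$$
The second inequality, $\overline{\ell}_{\a}(\tilde y)\ge \ell^\mathcal{N}_{\a}(\tilde y)$, is the step that needs justifying. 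The upper likelihood envelope is $(1-\eta)\ell^\mathcal{N}_{\a}+\eta q$, where the contamination $q$ may be chosen freely (the optimistic update takes a supremum over the credal set). Taking $q=\ell^\mathcal{N}_{\a}$, which lies in $\mathcal{L}_\eta$, recovers the baseline likelihood, so the supremum dominates it. Multiplying the resulting pointwise bound $\overline{\ell}_\text{pred}\ge \ell_\text{pred}$ by $(\tilde y-\zeta)^2$, integrating, and minimising over $\zeta$ then gives $\mathbb{V}_{\ell_\text{pred}}(\tilde Y)\le \mathbb{V}_{\overline{\ell}_\text{pred}}(\tilde Y)$.

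The main obstacle is this last step. The displayed statement of Corollary~\ref{cor-1} uses a fixed $u$, so it is not literally stated as $\overline{\ell}_\text{pred}\ge\ell_\text{pred}$. I expect to argue as follows: since Corollary~\ref{cor-1} holds for any density $u$, and the upper predictive is a supremum over admissible predictive contaminations, one may take $u=\ell_\text{pred}$ itself. This gives $\overline{\ell}_\text{pred}\ge(1-\eta)\ell_\text{pred}+\eta\ell_\text{pred}=\ell_\text{pred}$ directly. There is a side issue of integrability of $(\tilde y-\zeta)^2$ against $\overline{\ell}_\text{pred}$. If the upper variance is infinite, the inequality holds trivially, so this issue causes no difficulty.
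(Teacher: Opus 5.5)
Your proposal is correct and follows the paper's proof. The left inequality comes from Theorem~\ref{main_th} (equivalently Proposition~\ref{prop-var}) via $\underline{\ell}_\text{pred}\le(1-\eta)\ell_\text{pred}\le\ell_\text{pred}$, and the right one from Corollary~\ref{cor-1} with $u=\ell_\text{pred}$, after which you integrate against the nonnegative $(\tilde y-\zeta)^2$ and minimize over $\zeta$; your detour through $\overline{p}(\a\mid\y,\X,\bT)\ge p(\a\mid\y,\X,\bT)$ with an $\a$-dependent contamination is unnecessary but reaches the same pointwise bound.
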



\begin{proof}[Proof of Lemma \ref{sanity}]
By \citet[Appendix G]{walley}, we can write
\[
\mathbb{V}_{{\ell}_\text{pred}}(\tilde{Y})=\min_{\zeta\in\mathbb{R}} \mathbb{E}_{{\ell}_\text{pred}}[(\tilde{Y}-\zeta)^2].
\]
Moreover, by Theorem~\ref{main_th},
\[
\underline{\ell}_\text{pred}(\tilde y)\le (1-\eta)\ell_\text{pred}(\tilde y)\le \ell_\text{pred}(\tilde y),
\]
and by Corollary~\ref{cor-1}, choosing $u=\ell_\text{pred}$, we obtain
\[
\overline{\ell}_\text{pred}(\tilde y)\ge (1-\eta)\ell_\text{pred}(\tilde y)+\eta \ell_\text{pred}(\tilde y)=\ell_\text{pred}(\tilde y).
\]
Hence, for each $\zeta\in\mathbb{R}$,
\[
\mathbb{E}_{\underline{\ell}_\text{pred}}[(\tilde{Y}-\zeta)^2]\le
\mathbb{E}_{{\ell}_\text{pred}}[(\tilde{Y}-\zeta)^2]\le
\mathbb{E}_{\overline{\ell}_\text{pred}}[(\tilde{Y}-\zeta)^2],
\]
because $(\tilde y-\zeta)^2\ge 0$. Minimizing over $\zeta$ yields the claim.
\end{proof}

\begin{proof}[Proof of Lemma \ref{var-compl}]
    The first inequality in \eqref{var-bds} comes from Proposition \ref{prop-var}, the second from $\eta$ being in $[0,1]$, the third from our assumption that $\frac{(b-a)^2}{12} \geq \mathbb{V}_{{\ell}_\text{pred}}(\tilde{Y})$, the following approximation from our assumption ${\ell}_\text{pred} \approx {\ell}_\text{pred}^\prime$,
and the fourth from Proposition \ref{prop-var2} together with the assumption $m \approx 1$.
\end{proof}

Since the contamination bounds act as affine transformations of the baseline predictive variance, we first record a simple invariance property of extrema under positive affine transformations.

\begin{lemma}[Affine Invariance of Extrema]\label{lem:affine-invariance}
Let $f:I\to\mathbb{R}$ be a real-valued function on an interval $I\subset\mathbb{R}$,
and let $a>0$, $b\in\mathbb{R}$.
Define $g(x)\coloneqq a f(x)+b$.
Then,
\begin{align*}
    \arg\max_{x\in I} g(x)&=\arg\max_{x\in I} f(x),\\
\arg\min_{x\in I} g(x)&=\arg\min_{x\in I} f(x).
\end{align*}
\end{lemma}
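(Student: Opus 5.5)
The plan is to prove the two equalities of argmax sets by double inclusion, using only the strict monotonicity of the map $t\mapsto at+b$ when $a>0$. No structure of $I$ or regularity of $f$ is needed, so the argument is purely order-theoretic. I will treat the argmax statement first and then obtain the argmin statement either by the symmetric argument or by applying the argmax case to $-f$ and $-g=a(-f)-b$, which is again a positive affine transformation.

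For the argmax, I will first show that every maximizer of $f$ is a maximizer of $g$. Fix $x^\star\in\arg\max_{x\in I}f(x)$, so $f(x)\le f(x^\star)$ for all $x\in I$. Multiplying by $a>0$ preserves the inequality and adding $b$ preserves it too, so $g(x)\le g(x^\star)$ for all $x\in I$. Hence $x^\star\in\arg\max_{x\in I}g(x)$.

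For the reverse inclusion I will use the inverse map $f=(g-b)/a$, which is again positive affine with coefficient $1/a>0$. Applying the same argument with the roles of $f$ and $g$ swapped gives the other inclusion. I will also note that if one argmax set is empty then so is the other, since the two inclusions hold unconditionally. This covers the case where a supremum is not attained, so that the statement is true as an equality of possibly empty sets.

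The argmin case follows verbatim with inequalities reversed, or via the $-f$ trick above. There is no real obstacle. The only point requiring care is to state explicitly that $a>0$ is essential: for $a<0$, maximizers and minimizers swap, and for $a=0$, every point maximizes $g$. I will also remark that the same reasoning applies to local maximizers, since strict monotonicity preserves inequalities on any neighbourhood. That is the form actually used for the nondegenerate local maximizer $\psi_1^\star$ in Corollary~\ref{cor:robust-dd-variance}.
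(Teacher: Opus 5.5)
Your proof is correct and is essentially the paper's argument. The paper observes that $f(x_1)\le f(x_2)\iff af(x_1)+b\le af(x_2)+b$ for $a>0$, which packages your two inclusions into a single biconditional. Your added remark on local maximizers is worthwhile, because Corollary~\ref{cor:robust-dd-variance} concerns a local maximizer $\psi_1^\star$, while the lemma as stated addresses only global argmax sets.
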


\begin{proof}
For any $x_1,x_2\in I$,
\[
f(x_1)\le f(x_2)
\quad\Longleftrightarrow\quad
a f(x_1)+b\le a f(x_2)+b,
\]
since $a>0$. Thus the ordering of function values is preserved,
and the locations of maxima and minima coincide.
\end{proof}

\begin{proof}[Proof of Corollary \ref{cor:robust-dd-variance}]
Since $\eta<1$, we have $1-\eta>0$, and therefore Lemma~\ref{lem:affine-invariance} applies to $g_-$. The function $g_-(\psi_1)=(1-\eta)V_{\mathrm{base}}(\psi_1;\lambda)$
is an affine transformation of $V_{\mathrm{base}}$ with positive
multiplicative constant $1-\eta$. By Lemma~\ref{lem:affine-invariance},
$g_-$ and $V_{\mathrm{base}}$ share the same maximizer(s).

Similarly, under the truncation approximation with
\[
a(\psi_1)=\hat{f}(\tilde{\mathbf{x}})+\alpha_0 s(\tilde{\mathbf{x}}),
\qquad
b(\psi_1)=\hat{f}(\tilde{\mathbf{x}})+\beta_0 s(\tilde{\mathbf{x}}),
\]
for fixed constants $\alpha_0<\beta_0$ independent of $\psi_1$, the truncated Gaussian predictive variance satisfies
\[
V^\prime_{\mathrm{base}}(\psi_1;\lambda)
=
c_{\alpha_0,\beta_0} V_{\mathrm{base}}(\psi_1;\lambda)
\]
for some constant $c_{\alpha_0,\beta_0}>0$ depending only on $\alpha_0,\beta_0$.
Moreover,
\[
\frac{(b-a)^2}{12}
=
\frac{(\beta_0-\alpha_0)^2}{12} V_{\mathrm{base}}(\psi_1;\lambda).
\]
Hence
\[
g_+(\psi_1)
=
\left((1-\eta)c_{\alpha_0,\beta_0}
+
\eta\frac{(\beta_0-\alpha_0)^2}{12}\right)
V_{\mathrm{base}}(\psi_1;\lambda),
\]
which is a positive multiple of $V_{\mathrm{base}}(\psi_1;\lambda)$.
Applying Lemma~\ref{lem:affine-invariance} yields that $g_+$ and
$V_{\mathrm{base}}$ have the same maximizer(s).

Therefore the bounding functions $g_-$ and $g_+$ attain their maxima
at $\psi_1^\star$, and form an envelope around the same double-descent peak.
\end{proof}

\end{document}